\newtheorem{theorem}{Theorem}
\newtheorem{corollary}{Corollary}
\newtheorem{lemma}{Lemma}
\newtheorem{prop}{Proposition}
\begin{document}
%

\title{A Simple and Fast Algorithm \\ for L1-norm Kernel PCA}

%
%
%
%

\author{Cheolmin~Kim, 
        Diego~Klabjan
\IEEEcompsocitemizethanks{\IEEEcompsocthanksitem Cheolmin Kim and Diego Klabjan are with the Department of Industrial Engineering and Management Science, Northwestern University, Evanston, IL, 60208.\protect\\
E-mail: cheolmkim@u.northwestern.edu, d-klabjan@northwestern.edu}}

\IEEEtitleabstractindextext{%
\begin{abstract}
We present an algorithm for L1-norm kernel PCA and provide a convergence analysis for it. While an optimal solution of L2-norm kernel PCA can be obtained through matrix decomposition, finding that of L1-norm kernel PCA is not trivial due to its non-convexity and non-smoothness. We provide a novel reformulation through which an equivalent, geometrically interpretable problem is obtained. Based on the geometric interpretation of the reformulated problem, we present a \enquote{fixed-point} type algorithm that iteratively computes a binary weight for each observation. As the algorithm requires only inner products of data vectors, it is computationally efficient and the kernel trick is applicable. In the convergence analysis, we show that the algorithm converges to a local optimal solution in a finite number of steps. Moreover, we provide a rate of convergence analysis, which has been never done for any L1-norm PCA algorithm, proving that the sequence of objective values converges at a linear rate. In numerical experiments, we show that the algorithm is robust in the presence of entry-wise perturbations and computationally scalable, especially in a large-scale setting. Lastly, we introduce an application to outlier detection where the model based on the proposed algorithm outperforms the benchmark algorithms.
\end{abstract}

\begin{IEEEkeywords}
Principal Component Analysis, L1-norm, Kernel, Outlier Detection.
\end{IEEEkeywords}}

\maketitle

\IEEEdisplaynontitleabstractindextext

%
\IEEEpeerreviewmaketitle

\IEEEraisesectionheading{\section{Introduction}\label{sec:introduction}}

\IEEEPARstart{P}{rincipal} Component Analysis (PCA) is one of the most popular dimensionality reduction techniques \cite{jolliffe2002principal}. Given a large set of possibly correlated features, it attempts to find a small set of features (\textit{principal components}) that retain as much information as possible. To generate such new dimensions, it linearly transforms original features by multiplying \textit{loading vectors} in a way that newly generated features are orthogonal and have the largest variance.

In traditional PCA, variance is measured using the $L_2$-norm. This has a nice property in that although the problem itself is non-convex, an optimal solution can be easily found through matrix factorization. With this property and easy interpretability, PCA is extensively used in a variety of applications. Nonetheless, it still has some limitations. First, since it generates a new dimension through a linear combination of features, it cannot capture non-linear relationships among features. Second, as it uses the $L_2$-norm for measuring variance, its outcome tends to be affected by influential outliers. In order to overcome these limitations, the following two approaches have been proposed.

{\bf Kernel PCA} The idea of kernel PCA is to map original features into a high-dimensional feature space, and perform PCA in that high-dimensional feature space \cite{scholkopf1997kernel}. Using a non-linear mapping, it can capture non-linear relationships among features in an efficient way using the \textit{kernel trick}. Using the trick, principal components can be computed with no explicit mapping.

{\bf$L_1$-norm PCA} To alleviate the effects of influential outliers, $L_1$-norm PCA uses the $L_1$-norm instead of the $L_2$-norm to measure variance. The $L_1$-norm is more advantageous than the $L_2$-norm in presence of observations having large feature values since it is less influenced by them. Using this property, more robust results can be obtained by $L_1$-norm PCA in the presence of influential outliers.

In this paper, we combine the two approaches for the variance maximization version of $L_1$-norm PCA. In what follows, we always refer to the variance maximization version of $L_1$-norm PCA which is not the same as minimizing reconstruction error with respect to the $L_1$-norm. Compared to $L_2$-norm kernel PCA, the kernel version of $L_1$-norm PCA is a hard problem in that it is not only non-convex but also non-smooth. However, through a novel  reformulation, we convert it to a geometrically interpretable problem where the objective is to minimize the $L_2$-norm of a vector subject to a linear constraint consisting of terms involving the $L_1$-norm. For the reformulated problem, we present a \enquote{fixed point} type algorithm that iteratively computes a weight of $-1$ or $1$ for each observation using the kernel matrix and previous weights. We show that the kernel trick is applicable to this algorithm. Moreover, we prove that the algorithm converges to a local optimal solution in a finite number of steps and the sequence of objective values converges at a linear rate. In numerical experiments, we computationally investigate the robustness of the algorithm and introduce an application to outlier detection. We also provide a runtime comparison to other robust kernel PCA algorithms and $L_2$-norm kernel PCA.

Our work has the following contributions. 
\begin{enumerate}
\item[1.] We provide a novel reformulation of $L_1$-norm kernel PCA and present an iterative algorithm based on the geometric interpretation of the reformulated problem. This approach is not specific to $L_1$-norm kernel PCA but can be applied to a more general problem. Particularly, its application to $L_2$-norm PCA results in Power iteration \cite{golub2012matrix}.

\item[2.] We not only prove convergence but also provide a rate of convergence analysis. Although many algorithms have been proposed for $L_1$-norm PCA, none of them provided a rate of convergence analysis. We stress that our analysis is for the kernel version which clearly covers $L_1$-norm PCA. Through a novel analysis, we show that the algorithm attains a linear rate of convergence.

\item[3.] We introduce a methodology based on $L_1$-norm kernel PCA for outlier detection and demonstrate that it outperforms the benchmark algorithms.
\end{enumerate}

The paper is organized as follows. Section \ref{literature review} reviews related works and points out how our work is different. Section \ref{reformulation} introduces a novel reformulation of $L_1$-norm kernel PCA and provides a geometric interpretation behind it. Based on the geometric interpretation, we present an iterative algorithm in Section \ref{algorithm}. Section \ref{convergence} provides a convergence analysis for it and the experimental results are followed in Section \ref{experiment}.

\section{Related Work} \label{literature review}
Extracting a low-rank representation from a large matrix is an important problem in machine learning and statistics. In a variety of contexts, many previous works \cite{candes2009exact,candes2011robust,xu2010robust,liu2013robust} have been proposed to address this problem. Recovering a low-rank matrix from a sampling of its entries is studied in \cite{candes2009exact}. Given that the number of sampled entries is sufficiently large, exact recovery is guaranteed with high probability by solving a simple convex optimization problem \cite{candes2009exact}. Assuming that a data matrix can be decomposed into the sum of a low-rank matrix $L_0$ and a sparse matrix $S_0$, a convex program (known as \textit{robust PCA}) that minimizes a weighted combination of the nuclear norm of $L_0$ and the $L_1$ norm of $S_0$ is presented in \cite{candes2011robust}. Also, a variant of robust PCA that identifies outliers by additionally imposing a column-sparse structure on $S_0$ is considered in \cite{xu2010robust}. Under some mild conditions, exact recovery is shown for both models \cite{candes2011robust,xu2010robust}. Moreover, exact recovery of mixture data is studied in \cite{liu2013robust,liu2014recovery,liu2016deterministic,liu2017blessing}. Utilizing a dictionary matrix, low-rank representation (LRR) \cite{liu2013robust} is shown to better handle mixture data than robust PCA. While matrix recovery is the main focus of theses works, our work considers dimensionality reduction with emphasis on robustness, especially focusing on kernel PCA with the $L_1$-norm.

To reduce the number of features in a robust way, the $L_1$-norm has been involved in many PCA studies \cite{brooks2013pure,Park2015,park2016iteratively,park2018three,nie2011robust,mccoy2011two,markopoulos2014optimal} and subspace estimation formulations \cite{ke2005,ding2006r}. Finding a subspace onto which the $L_1$ projections of data vectors have the smallest reconstruction error is studied in \cite{brooks2013pure}. Based on the observation that the $L_1$ projection occurs along a single unit direction, it finds an optimal subspace for each unit direction by solving $d$ least absolute deviation regression problems, each having one dimension as a dependent variable while having the other dimensions as independent variables. Using linear programming, this approach can find a global optimal subspace in polynomial time \cite{brooks2013pure}. 

Minimizing reconstruction error with respect to the $L_1$-norm is considered in \cite{park2016iteratively,park2018three,ke2005}. While the PCA problem of minimizing $\| M - XX^TM\|_1$ subject to $X^TX = I$ is considered in \cite{park2016iteratively}, the subspace estimation problem of minimizing $E(U,V)=\| M - UV\|_1$ is studied in \cite{ke2005} where $M$ is a data matrix. In order to solve the former problem, an iterative algorithm that computes a weight for each observation and applies $L_2$-norm PCA on the weighted data matrix is presented in \cite{park2016iteratively}. On the other hand, the latter problem is solved using alternative convex minimization based on the observation that $E(U,V)$ becomes a convex function once $U$ or $V$ is known. It alternatively optimizes one matrix at a time while keeping the other one fixed, repeating this process until convergence. Also, a subspace estimation formulation that minimizes reconstruction error with respect to the $R_1$-norm, $\|M-UV\|_{R_1}=\sum_{i=1}^n \| \text{x}_i - U\text{v}_i\|_2$ where $\text{x}_i$ is the $i^{th}$ column of $M$ and $\text{v}_i$ is that of $V$, is presented in \cite{ding2006r}. Since this formulation minimizes the sum of distances with respect to the $L_2$-norm, it is different from $L_2$-norm PCA which minimizes the sum of squared distances with respect to the $L_2$-norm. Nonetheless, they share the same property that they have a unique global solution which is rotational invariant \cite{ding2006r}.

Maximizing variance with respect to the $L_1$-norm, which we refer to as $L_1$-norm PCA, is studied in \cite{markopoulos2014optimal,mccoy2011two,Park2015,nie2011robust}. Our work also considers this formulation rather than the previous two since it has a favorable structure in that an optimal solution can be represented as a linear combination of data vectors with a weight of $-1$ or $1$. $L_1$-norm PCA is shown to be NP-hard in \cite{markopoulos2014optimal} and \cite{mccoy2011two}. Nevertheless, an algorithm finding a global optimal solution is proposed in \cite{markopoulos2014optimal}. Utilizing the auxiliary-unit-vector technique \cite{karystinos2010efficient}, it computes a global optimal solution with complexity $\mathcal{O}(n^{pr+p-1})$ where $n$ is the number of observations, $r$ is the rank of the data matrix, and $p$ is the desired number of principal components. Assuming $r$ and $p$ are fixed, the runtime of this algorithm is polynomial in $n$. However, if $n,p,r$ are large, it can be computationally prohibitive. Instead of finding a global optimal solution which is intractable in general, our work focuses on developing an efficient algorithm finding a local optimal solution for $L_1$-norm kernel PCA.

Recognizing the hardness of $L_1$-norm PCA, an approximation algorithm is presented in \cite{mccoy2011two} based on the known Nesterov's theorem \cite{nesterov1998semidefinite}. In this work, $L_1$-norm PCA is relaxed to a semi-definite programming (SDP) problem and alternatively, the SDP relaxation is considered. After solving the relaxed problem, it generates a random vector and uses randomized rounding to produce a feasible solution. This randomized algorithm is a $\sqrt{2/\pi}$-approximate algorithm in expectation. To achieve this approximation ratio with high probability, it performs randomized rounding multiple times and takes the one having the best objective value. Rather than providing an approximation guarantee by solving a relaxed problem, our work directly considers the kernel version of $L_1$-norm PCA and develops an efficient algorithm finding a local optimal solution.

Another approach utilizing a known mathematical programming model is introduced in \cite{Park2015} where the author proposes an iterative algorithm that solves a mixed integer programming problem in each iteration. Given an orthonormal matrix of loading vectors, it perturbs the matrix slightly in a way that the resulting matrix yields the largest objective value. After the perturbation, it uses singular value decomposition to recover orthogonality. The algorithm is completely different from the one proposed herein and the sequence of objective values does not necessarily improve over iterations. Unlike it, our algorithm guarantees that the sequence of objective values keeps improving and converges at a linear rate.

A simple numerical algorithm finding a local optimal solution is proposed in \cite{kwak2008principal}. In this work, an optimal solution is assumed to have a certain form, and weights involved in that form are updated in each iteration, improving the objective value. A similar algorithm and its extended version that finds multiple loading vectors at once are derived in \cite{nie2011robust} utilizing an optimization algorithm for general $L_1$-norm maximization problems. In the case of linear kernel, our algorithm uses the same framework as the one in \cite{kwak2008principal} and \cite{nie2011robust}. However, while the algorithm in \cite{kwak2008principal} is derived without any justification, we provide a geometric interpretation behind the algorithm, which is different from the derivation in  \cite{nie2011robust}. Moreover, we provide a rate of convergence analysis and introduce a kernel version, which are not considered in \cite{kwak2008principal} and \cite{nie2011robust}.

On other hand, the kernel version of $L_1$-norm PCA has been rarely studied. Due to the difficulty of applying the kernel trick to $L_1$-norm kernel PCA, an alternative method named \textit{nonlinear projection trick} is applied in \cite{kwak2013nonlinear}. Based on the finding that an optimal loading vector lies in the span of $\Phi(A)^T U \Lambda^{-1/2}$ where $\Phi(A)$ is a high-dimensionally mapped data matrix and $U\Lambda U^T$ is the eigenvalue decomposition of the kernel matrix $K$, it alternatively considers $L_1$-norm PCA having $U\Lambda^{1/2}$ in place of $\Phi(A)$ and solves it using the algorithm in \cite{kwak2008principal}. Another kernel extension of $L_1$-norm PCA is studied in \cite{xiao2013l1}. In this work, a linear system involving a kernel matrix is solved in each iteration and the resulting solution is used to update the iterate. While the algorithms in \cite{kwak2013nonlinear} and \cite{xiao2013l1} entail either eigenvalue decomposition or solving a linear system, our algorithm requires only a matrix-vector multiplication in each iteration, making it suitable in a large-scale setting.

\section{Kernel-based $L_1$-norm PCA Formulations} \label{reformulation}
We consider $L_1$-norm PCA in a high-dimensional feature space $F$.
Suppose we map data vectors $a_i \in \mathbb{R}^{d}$, $i=1,\ldots,n$ into a feature space $F$ by a possibly non-linear mapping $\Phi:\mathbb{R}^{d} \rightarrow F$.
Assuming that each feature is standardized with a mean of 0 and standard deviation of 1 and that the kernel matrix $K$ defined by $K_{ij}=\Phi(a_i)^T\Phi(a_j)$ satisfies
\begin{enumerate}
    \item $K_{ii} > 0$ for $1 \leq i \leq n$
    \item $|K_{ij}| < \infty$ for $1 \leq i,j \leq n$,
\end{enumerate}
the kernel version of $L_1$-norm PCA is formulated as
\begin{equation} \label{eq:original}
\begin{aligned}
& \underset{\text{x} \in F} {\text{maximize}}
& & f(\text{x})=\sum\limits_{i=1}^n |\Phi(a_i)^T\text{x}| \\
& \text{subject to}
& & \|\text{x}\|_2 = 1.
\end{aligned}
\end{equation}

This formulation having $\Phi(a_i)$ in place of $a_i$ extends the variance maximization version of $L_1$-norm PCA in the obvious way and is also considered in \cite{kwak2013nonlinear,xiao2013l1}. In this formulation, we only consider extracting the first loading vector. This assumption is justifiable since the subsequent loading vectors can be found by repeatedly solving \eqref{eq:original}. 
For example, once we obtain the first loading vector $\text{x}^*$, we can find the second loading vector by solving \eqref{eq:original} with $\Phi(a_i) - \text{x}^* (\Phi(a_i)^T \text{x}^*)$ in place of $\Phi(a_i)$.

Solving \eqref{eq:original} is not trivial since it has a convex non-smooth objective function to maximize and a Euclidean unit ball constraint. In order to better understand the problem and set an algorithmic foundation, we reformulate \eqref{eq:original} as

\begin{equation} \label{eq:new}
\begin{aligned}
& \underset{\text{x} \in F} {\text{minimize}}
& & g(\text{x})=\|\text{x}\|_2 \\
& \text{subject to}
& & \sum\limits_{i=1}^n |\Phi(a_i)^T\text{x}| = 1.
\end{aligned}
\end{equation}

In order to prove the equivalence of \eqref{eq:original} and \eqref{eq:new}, we argue that an optimal solution of one formulation can be derived from an optimal solution of the other formulation by means of some mapping. Two optimization problems are equivalent if there exists some mapping $h$ such that if $\text{x}^*$ is an optimal solution to one problem, then $h(\text{w}^*)$ is an optimal solution to the other problem, and vice versa for a possible different mapping function \cite{boyd2004convex}.

\begin{prop}
Let $\textup{x}_1^*$ and $\textup{y}_2^*$ be an optimal solution to \eqref{eq:original} and \eqref{eq:new}, respectively. Then, 
\begin{align*}
\textup{x}_2^*= \frac{\textup{x}_1^*}{\sum_{i=1}^n |\Phi(a_i)^T\textup{x}_1^*|}
\end{align*}
is an optimal solution to \eqref{eq:new}, and 
\begin{align*}
\textup{y}_1^* = \frac{\textup{y}_2^*}{\|\textup{y}_2^*\|_2}
\end{align*}
is an optimal solution to \eqref{eq:original}.
\end{prop}

\begin{proof}
It is easy to check that $\textup{x}_2^*$ is a feasible solution to \eqref{eq:new}. Suppose that $\textup{x}_2^*$ is not optimal to \eqref{eq:new}. Then, there exists some feasible $\text{z}$ such that 
\begin{align*}
\|\text{z}\|_2 < \|\text{x}_2^*\|_2.
\end{align*}
As $\text{z}$ is feasible to \eqref{eq:new}, we have
\begin{align*}
\sum\limits_{i=1}^n |\Phi(a_i)^T\text{z}| = 1.
\end{align*}
Let $\text{w}=\frac{\text{z}}{\|\text{z}\|_2}$. Then, we have
\begin{equation*}
f(\text{w})=\sum_{i=1}^n |\Phi(a_i)^T\text{w}|= \frac{\sum_{i=1}^n |\Phi(a_i)^T\text{z}|}{\|\text{z}\|_2} = \frac{1}{\|\text{z}\|_2}.
\end{equation*}
In the same way, we obtain
\begin{align*}
f(\text{x}_1^*)= \frac{1}{\|\text{x}_2^*\|_2}
\end{align*}
since
\begin{align*}
\text{x}_1^* = \frac{\text{x}_1^*}{\| \text{x}_1^* \|_2} = \frac{\sum_{i=1}^n |\Phi(a_i)^T\textup{x}_1^*|}{\sum_{i=1}^n |\Phi(a_i)^T\textup{x}_1^*|} \frac{\text{x}_1^*}{\| \text{x}_1^* \|_2} = \frac{\text{x}_2^*}{\| \text{x}_2^*\|_2}.
\end{align*}
This leads to
\begin{align*}
f(\text{x}_1^*) < f(\text{w}), 
\end{align*}
which contradicts the assumption that $\text{x}_1^*$ is an optimal solution of \eqref{eq:original}. Therefore, $\textup{x}_2^*$ is optimal to \eqref{eq:new}

On the other hand, it is obvious that $\text{y}_1^*$ is feasible to \eqref{eq:original}. To derive a contradiction, suppose that $\text{y}_1^*$ is not optimal to \eqref{eq:original}. Then, there exists some feasible  $\text{w}$ such that 
\begin{align*}
\sum\limits_{i=1}^n |\Phi(a_i)^T \text{y}_1^*|<\sum\limits_{i=1}^n |\Phi(a_i)^T\text{w}|. 
\end{align*}
Let
\begin{align*}
\text{z}= \frac{\text{w}}{\sum_{i=1}^n |\Phi(a_i)^T\text{w}|}.
\end{align*}
Then, we have 
\begin{align*}
g(\text{z}) = \frac{\|\text{w}\|_2}{\sum_{i=1}^n |\Phi(a_i)^T\text{w}|} = \frac{1}{\sum_{i=1}^n |\Phi(a_i)^T\text{w}|}
\end{align*}
since $\|\text{w}\|_2 = 1$.
In the same way, we obtain
\begin{align*}
g(\text{y}_2^*)= \frac{1}{\sum_{i=1}^n |\Phi(a_i)^T \textup{y}_1^*|}.
\end{align*}
for
\begin{align*}
\text{y}_2^*= \frac{\textup{y}_1^*}{\sum_{i=1}^n |\Phi(a_i)^T \textup{y}_1^*|} 
\end{align*}
due to $\|\textup{y}_1^*\|_2 = 1$.
As a result, we have
\begin{align*}
g(\text{y}_2^*) > g(\text{z}),
\end{align*}
contradicting the assumption that $\text{y}_2^*$ is optimal to \eqref{eq:new}. Therefore, $\text{y}_1^*$ is optimal to \eqref{eq:original}.
\end{proof}

To understand formulation \eqref{eq:new}, we first look at the constraint set, 
\begin{align*}
\partial P = \Big\{ \text{x} \big| \sum\limits_{i=1}^n |\Phi(a_i)^T\text{x}| = 1 \Big\}.
\end{align*}
Geometrically, this constraint set is symmetric with respect to the origin and represents the boundary of polytope
\begin{align*}
P=\Big\{ \text{x} \big| \sum\limits_{i=1}^n |\Phi(a_i)^T\text{x}| \leq 1 \Big\}.
\end{align*}
It is easy to check that $P$ is a polytope since it can be written as the intersection of a finite set of linear inequalities each having the form of $\sum_{i=1}^n c_i\Phi(a_i)^T\text{x} \leq 1$
where $c_i \in \{-1,1\}$.
As the objective function measures the distance from the origin, formulation \eqref{eq:new} can be understood as a problem of finding the closest point to the origin from the boundary of the polytope $\partial P$. The following proposition shows that an optimal solution $\text{x}^*$ must be perpendicular to one of the faces of $\partial P$. 

\begin{prop} \label{prop2}
An optimal solution $\textup{x}^*$ is perpendicular to the face which it lies on.
\end{prop}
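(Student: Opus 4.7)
The plan is to argue by contradiction using a small perturbation of $x^*$ tangent to the face it lies on. Let $F$ denote the smallest face of $P$ containing $x^*$ and let $V$ be the linear subspace parallel to the affine hull of $F$. Decompose $x^* = x^*_\perp + x^*_\parallel$ with $x^*_\perp \in V^\perp$ and $x^*_\parallel \in V$; the statement is equivalent to $x^*_\parallel = 0$. If instead $x^*_\parallel \neq 0$, I would build the candidate $y(\epsilon) = x^* - \epsilon\, x^*_\parallel$, show it is feasible for \eqref{eq:new} for small $\epsilon > 0$, and check that it has strictly smaller $L_2$-norm than $x^*$, producing the contradiction.

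To make this precise, I would first identify $F$ in terms of active sign patterns. Using the representation $P = \{x : w_c^T x \leq 1,\ c \in \{-1,1\}^n\}$ with $w_c = \sum_{i=1}^n c_i \Phi(a_i)$, let $C$ be the set of patterns $c$ active at $x^*$, i.e.\ for which $w_c^T x^* = 1$; these are exactly the patterns compatible with the signs of $\Phi(a_i)^T x^*$ (indeterminate where this inner product vanishes). The affine hull of $F$ is then $\bigcap_{c \in C}\{x : w_c^T x = 1\}$, so $V = \{v : w_c^T v = 0\text{ for all }c \in C\}$; in particular $w_c^T x^*_\parallel = 0$ for every $c \in C$.

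For small $\epsilon > 0$ the active constraints remain tight at $y(\epsilon)$, since $w_c^T y(\epsilon) = w_c^T x^* - \epsilon\, w_c^T x^*_\parallel = 1$ for $c \in C$, while the inactive constraints stay strict by continuity from $w_c^T x^* < 1$ for $c \notin C$. Using the identity $\sum_{i=1}^n |\Phi(a_i)^T y| = \max_{c \in \{-1,1\}^n} w_c^T y$, this maximum equals $1$ at $y(\epsilon)$, so $y(\epsilon)$ satisfies the constraint of \eqref{eq:new}. A direct expansion, using $x^* \cdot x^*_\parallel = \|x^*_\parallel\|_2^2$, gives
\begin{equation*}
\|y(\epsilon)\|_2^2 = \|x^*\|_2^2 - \epsilon(2-\epsilon)\|x^*_\parallel\|_2^2,
\end{equation*}
which is strictly less than $\|x^*\|_2^2$ for every $\epsilon \in (0,2)$, contradicting the optimality of $x^*$.

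The main obstacle I expect is the feasibility verification in the third step: the constraint $\sum_i |\Phi(a_i)^T y| = 1$ is the maximum of exponentially many linear forms, so one has to argue that a small tangent shift inside $V$ neither deactivates a pattern in $C$ nor promotes one from outside $C$. Strict slackness of the inactive patterns at $x^*$ handles the promotion issue uniformly for $\epsilon$ small enough, and the orthogonality $w_c^T x^*_\parallel = 0$ preserves the active ones exactly. Once this is in place, the norm calculation above closes the argument.
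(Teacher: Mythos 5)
Your proof is correct, but it takes a genuinely different route from the paper's. You argue locally: you decompose $x^*$ against the subspace $V$ parallel to the smallest face containing it, slide $x^*$ tangentially by $-\epsilon\, x^*_\parallel$, and check that the active patterns stay active (via $w_c^T x^*_\parallel = 0$), that the inactive ones stay slack (by strict inequality plus finiteness of the pattern set), and that the norm drops by $\epsilon(2-\epsilon)\|x^*_\parallel\|_2^2$. The paper instead argues globally: it fixes the single sign pattern $c_i^* = \mathrm{sgn}(\Phi(a_i)^T x^*)$, jumps to the foot of the perpendicular $\mathrm{w} = \sum_i \Phi(a_i)c_i^* / \|\sum_i \Phi(a_i)c_i^*\|_2^2$ from the origin onto the supporting hyperplane, shows $\sum_i |\Phi(a_i)^T \mathrm{w}| \geq 1$ so that rescaling $\mathrm{w}$ back onto the constraint set cannot increase the norm, and derives the contradiction from $\|\mathrm{w}\|_2 < \|x^*\|_2$. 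The paper's computation has the advantage of exhibiting the explicit improved point, which is precisely the update later used in Algorithm~\ref{alg:L1KernelPCA}, whereas yours is the cleaner first-order optimality argument and treats more carefully the degenerate situation where some $\Phi(a_i)^T x^* = 0$ and several patterns are simultaneously active. One caveat: in that degenerate case your conclusion, $x^* \in V^\perp = \mathrm{span}\{w_c : c \in C\}$, is weaker than what Corollary~\ref{corollary1} actually extracts from the proposition, namely that $x^*$ is a scalar multiple of the single vector $\sum_i \Phi(a_i)c_i^*$; in the generic case $|C| = 1$ the two statements coincide, and the paper itself glosses over the degenerate case entirely.
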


\begin{proof}
Let $\text{x}^*$ be an optimal solution of \eqref{eq:new} and define a face $E$ such that
\begin{align*}
E = \Big \{ \text{x} \big | \sum_{i=1}^n c_i^* \Phi(a_i)^T \text{x}=1 \Big \} \cap \partial P
\end{align*}
where
\begin{align*}
c_i^* = \text{sgn}(\Phi(a_i)^T \text{x}^*)
\end{align*}
for $1 \leq i \leq n$. If $\text{x}^*$ is not perpendicular to face $E$, then 
\begin{align*}
\text{w}= \frac{\sum_{i=1}^n \Phi(a_i)c_i^*}{\|\sum_{i=1}^n \Phi(a_i)c_i^*\|_2^2}
\end{align*}
is the closest point to the origin from
\begin{align*}
\Big \{ \text{x} \big | \sum_{i=1}^n c_i^* \Phi(a_i)^T \text{x}=1 \Big \}
\end{align*}
having 
\begin{align}
\|\text{w}\|_2 < \|\text{x}^*\|_2.    
\label{eq:norm-w-norm-x-inequality}
\end{align}
Let
\begin{align*}
\text{z}= \frac{\text{w}}{\sum_{i=1}^n |\Phi(a_i)^T \text{w}|}. 
\end{align*}
Then, $\text{z}$ is feasible to \eqref{eq:new} and has the objective value of 
\begin{align}
\| \text{z} \|_2 = \frac{\|\text{w}\|_2}{\sum_{i=1}^n |\Phi(a_i)^T \text{w}|}.
\label{eq:g-z-objective}
\end{align}
From
\begin{align*}
\| \sum \limits_{i=1}^n \Phi(a_i)c_i^* \|_2^2 = \sum \limits_{i=1}^n \Phi(a_i)^Tc_i^*(\sum \limits_{j=1}^n \Phi(a_j)c_j^*),
\end{align*}
we have
\begin{align*}
& \sum \limits_{i=1}^n |\Phi(a_i)^T (\sum \limits_{j=1}^n \Phi(a_j)c_j^*)| - \| \sum \limits_{i=1}^n \Phi(a_i)c_i^* \|_2^2 \geq 0
\end{align*}
resulting in
\begin{align}
\sum \limits_{i=1}^n |\Phi(a_i)^T \text{w}| = \frac{\sum_{i=1}^n |\Phi(a_i)^T (\sum_{j=1}^n \Phi(a_j)c_j^*)|}{\| \sum_{i=1}^n \Phi(a_i)c_i^* \|_2^2} \geq 1.
\label{eq:f-w-lower-bound-1}
\end{align}
As a result, by \eqref{eq:norm-w-norm-x-inequality}, \eqref{eq:g-z-objective}, and \eqref{eq:f-w-lower-bound-1}, we have
\begin{align*}
\| \text{z} \|_2 \leq \|\text{w}\|_2 < \|\text{x}^*\|_2,
\end{align*}
which contradicts the assumption that $\text{x}^*$ is optimal to \eqref{eq:new}. Therefore, $\text{x}^*$ must be perpendicular to $E$.
\end{proof}
Proposition \ref{prop2} is important since it helps to characterize the form of an optimal solution $\text{x}^*$. From Proposition \ref{prop2}, we obtain the following corollary.
\begin{corollary} \label{corollary1}
An optimal solution $\textup{x}^*$ of \eqref{eq:new} has the form of 
\begin{align*}
\textup{x}^*= \frac{\textup{y}^*}{\sum_{i=1}^n |\Phi(a_i)^T \textup{y}^*|}
\end{align*}
for some $\textup{y}^*$ and $\textup{c}^*$ such that
\begin{align*}
\textup{y}^* = \sum \limits_{i=1}^n \Phi(a_i)c_i^*
\end{align*}
and 
\begin{align*}
c_i^* = \textup{sgn}(\Phi(a_i)^T \textup{y}^*),
\end{align*}
for $1 \leq i \leq n$.
\end{corollary}
The characterization of an optimal loading vector using a sign vector is first proposed in \cite{kwak2008principal} without any justification. However, we provide a derivation based on the geometry of $\partial P$, which is different from the one in \cite{nie2011robust} that uses the KKT conditions.
Moreover, since we have
\begin{align}
\|\text{x}^*\|_2 = \frac{\|\text{y}^*\|_2}{\sum_{i=1}^n |\Phi(a_i)^T \text{y}^*|} = \frac{1}{\|\sum_{i=1}^n \Phi(a_i)c_i^*\|_2} \label{eq:conversion}
\end{align}
due to
\begin{align*}
\sum_{i=1}^n |\Phi(a_i)^T \text{y}^*| &= \sum_{i=1}^n c_i^*\Phi(a_i)^T \text{y}^* = \|\sum_{i=1}^n \Phi(a_i)c_i^*\|_2^2,
\end{align*}
we can further show that an optimal solution of formulation \eqref{eq:new} can be found from an optimal solution of the following binary problem,
\begin{equation} \label{eq:binary}
\begin{aligned}
& \underset{\text{c} \in {\{-1,1\}}^n} {\text{maximize}}
& & \|\sum_{i=1}^n \Phi(a_i)c_i\|_2^2.
\end{aligned}
\end{equation}

\begin{prop} \label{prop4}
Let $\textup{c}^*$ be an optimal solution of binary formulation \eqref{eq:binary}. Then, 
\begin{align*}
\textup{y}^* = \sum \limits_{i=1}^n \Phi(a_i)c_i^*
\end{align*}
satisfies 
\begin{align}
c_i^* = \textup{sgn}(\Phi(a_i)^T \textup{y}^*),
\label{eq:sign-normal-stability}
\end{align}
for $1 \leq i \leq n$. Moreover,
\begin{align*}
\textup{x}^*= \frac{\textup{y}^*}{\sum_{i=1}^n |\Phi(a_i)^T \textup{y}^*|}
\end{align*}
is an optimal solution of formulation \eqref{eq:new}.
\end{prop}
\begin{proof}
To deduce a contradiction, let us assume that there exists some nonempty set $J \subset \{1,\ldots,n\}$ such that 
\begin{align*}
c_j^* = -\text{sgn}(\Phi(a_j)^T \text{y}^*)
\end{align*}
for $j \in J$. Since $\text{c}^*$ is an optimal solution of \eqref{eq:binary}, flipping the sign of $c_j^*$ for $j \in J$ must not improve the objective value of \eqref{eq:binary}. However, for any $j \in J$, flipping the sign of $c_j^*$ results in
\begin{align*}
\|\text{y} - 2\Phi(a_j)c_j^*\|_2^2  > \|\text{y}\|_2^2
\end{align*}
since
\begin{align*}
\| \text{y}^* - 2\Phi(a_j)c_j^* \|_2^2 = \| \text{y} \|_2^2 +4|\text{y}^T(\Phi(a_j))| + 4\|\Phi(a_j)\|_2^2
\end{align*}
and $\|\Phi(a_j)\|_2^2 > 0$. This contradicts the assumption that $\text{c}^*$ is an optimal solution to \eqref{eq:binary}. Therefore, $\text{y}^*$ must satisfy 
\begin{align*}
c_i^* = \text{sgn}(\Phi(a_i)^T \text{y}^*)
\end{align*}
for $1 \leq i \leq n$. Since $\text{y}^*$ and $\text{c}^*$ satisfy \eqref{eq:sign-normal-stability} and  $\text{c}^*$ maximizes the objective value of \eqref{eq:binary},
\begin{align*}
\text{x}^*= \frac{ \text{y}^*}{\sum_{i=1}^n |\Phi(a_i)^T \text{y}^*|}
\end{align*}
is a minimizer of \eqref{eq:new} due to Corollary \ref{corollary1} and \eqref{eq:conversion}.
\end{proof}
The following result has been shown in \cite{markopoulos2014optimal} for the linear kernel case but here we generalize it.
\begin{corollary}
Formulation \eqref{eq:new} is equivalent to formulation \eqref{eq:binary}.
\end{corollary}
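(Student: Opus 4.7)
The plan is to establish the equivalence by exhibiting a bijection between optimal solutions of the two formulations that preserves (in a reciprocal way) the optimal values. Proposition \ref{prop4} already supplies one direction: from any optimizer $\boldsymbol{c}^*$ of \eqref{eq:binary}, the vector $\text{x}^* = (1/\sum_{i=1}^n |\Phi(a_i)^T \text{y}^*|)\,\text{y}^*$, with $\text{y}^* = \sum_{i=1}^n \Phi(a_i)c_i^*$, is optimal for \eqref{eq:new}. So the work is to prove the converse and relate the optimal values.

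For the converse, I would start from an arbitrary optimal $\text{x}^*$ of \eqref{eq:new}. By Corollary \ref{corollary1}, $\text{x}^*$ must be of the form $\text{x}^* = (1/\sum_i |\Phi(a_i)^T \text{y}^*|)\,\text{y}^*$ with $\text{y}^* = \sum_i \Phi(a_i)c_i^*$ and $c_i^* = \text{sgn}(\Phi(a_i)^T \text{y}^*)$. The key identity is the chain \eqref{conversion} derived just above the corollary, which gives
\begin{equation*}
\|\text{x}^*\|_2 = 1\big/\bigl\|\textstyle\sum_{i=1}^n \Phi(a_i)c_i^*\bigr\|_2.
\end{equation*}
Thus minimizing $\|\text{x}\|_2$ subject to the constraint in \eqref{eq:new} is equivalent, over the subset of feasible points satisfying Corollary \ref{corollary1}, to maximizing $\|\sum_i \Phi(a_i)c_i\|_2^2$ over $\{-1,1\}^n$, which is exactly \eqref{eq:binary}.

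It remains to rule out the possibility that some $\tilde{\boldsymbol{c}} \in \{-1,1\}^n$ strictly beats $\boldsymbol{c}^*$ in \eqref{eq:binary}. If $\|\sum_i \Phi(a_i)\tilde{c}_i\|_2 > \|\sum_i \Phi(a_i)c_i^*\|_2$, then first replace $\tilde{\boldsymbol{c}}$ by an optimal solution of \eqref{eq:binary} (which exists since the feasible set is finite and nonempty); call it $\boldsymbol{c}^{\mathrm{opt}}$. By Proposition \ref{prop4}, the corresponding $\text{x}^{\mathrm{opt}}$ is feasible for \eqref{eq:new}, and by \eqref{conversion} applied to $\boldsymbol{c}^{\mathrm{opt}}$ we get $\|\text{x}^{\mathrm{opt}}\|_2 = 1/\|\sum_i \Phi(a_i)c_i^{\mathrm{opt}}\|_2 < 1/\|\sum_i \Phi(a_i)c_i^*\|_2 = \|\text{x}^*\|_2$, contradicting optimality of $\text{x}^*$. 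Hence $\boldsymbol{c}^*$ is optimal for \eqref{eq:binary}, completing the bijection.

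The only subtle step is the second paragraph: one must be careful that Corollary \ref{corollary1} really forces \emph{every} optimal $\text{x}^*$ (not just one) into the scalar-multiple form, so that an optimizer of \eqref{eq:new} always induces a binary vector $\boldsymbol{c}^*$ to feed into \eqref{eq:binary}. Once that structural form is in hand, identity \eqref{conversion} converts the minimization of $\|\text{x}\|_2$ to the maximization in \eqref{eq:binary} with optimal values related by reciprocation, and the equivalence follows immediately; no additional calculation is required.
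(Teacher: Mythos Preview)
Your proposal is correct and follows essentially the same route as the paper: both arguments use Corollary~\ref{corollary1} to force any optimizer of \eqref{eq:new} into the scalar-multiple form, identity \eqref{conversion} to convert $\|\text{x}^*\|_2$ into $1/\|\sum_i \Phi(a_i)c_i^*\|_2$, and Proposition~\ref{prop4} to guarantee that the optimizer of \eqref{eq:binary} satisfies the sign condition $c_i^*=\text{sgn}(\Phi(a_i)^T\text{y}^*)$. The paper phrases the conclusion as ``\eqref{eq:new} reduces to \eqref{eq:binary} with an added sign constraint, which is redundant by Proposition~\ref{prop4},'' whereas you spell out the bijection and the contradiction explicitly, but the content is the same.
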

\begin{proof}
Based on Corollary \ref{corollary1} and \eqref{eq:conversion}, we can formulate \eqref{eq:new} as
\begin{align*}
& \underset{\text{c} \in {\{-1,1\}}^{n}} {\text{maximize}} 
& & \|\sum \limits_{i=1}^n \Phi(a_i)c_i\|_2^2 \\
& \text{subject to}
& & \text{y} = \sum \limits_{i=1}^n \Phi(a_i)c_i \\
& & & c_i = \text{sgn}(\Phi(a_i)^T \text{y}), \quad 1 \leq i \leq n.
\end{align*}
Since an optimal solution $\text{c}^*$ to \eqref{eq:binary} satisfies the constraints of the above optimization problem by Proposition \ref{prop4}, the two formulations are essentially the same.
\end{proof}

It is interesting to note that we can reduce formulation \eqref{eq:binary} to the weighted max-cut problem since
\begin{align}
\|\sum \limits_{i=1}^n \Phi(a_i)c_i\|_2^2 &= \sum \limits_{i,j=1}^n K_{ij} + \sum \limits_{i,j=1}^n (-2 K_{ij}) \Big( \frac{1-c_i c_j}{2} \Big). \label{eq:binary-expansion}
\end{align}
Using the above reduction, we can alternatively consider the weighted max-cut problem on a complete graph with weight $w_{ij}=-K_{ij}$.
Therefore, a popular approximation algorithm for the weighted max-cut problem \cite{goemans1995improved} can be used to solve \eqref{eq:binary}. However, due to the additional constant terms in \eqref{eq:binary-expansion}, this does not imply a constant worst case approximation ratio algorithm for \eqref{eq:binary}.

\section{Algorithm} \label{algorithm}
In this section, we develop an algorithm that finds a local optimal solution to \eqref{eq:new} based on the findings in Section \ref{reformulation}. Before giving details of the algorithm, we first provide the idea behind the algorithm.

The main idea of the algorithm is to move along the boundary of $P$ so that the $L_2$-norm of $\text{x}_k$ successively decreases. Figure \ref{idea} illustrates a step of the algorithm. Starting with an iterate $\text{x}^{k}$, we first identify the hyperplane $\text{h}^{k}$ which the current iterate $\text{x}^{k}$ lies on. After identifying the equation of $\text{h}^{k}$, we find the closest point to the origin from $\text{h}^{k}$, which we denote by $\text{z}^{k}$. After that, we obtain $\text{x}^{k+1}$ by projecting $\text{z}^{k}$ to the constraint set $\partial P$, which is done by multiplying an appropriate scalar between 0 and 1. We repeat this process until the sequence of iterates $\{\text{x}^{k}\}$ converges.

\begin{figure}[ht]
\vskip 0.2in
\begin{center}
\centerline{\includegraphics[scale=0.6]{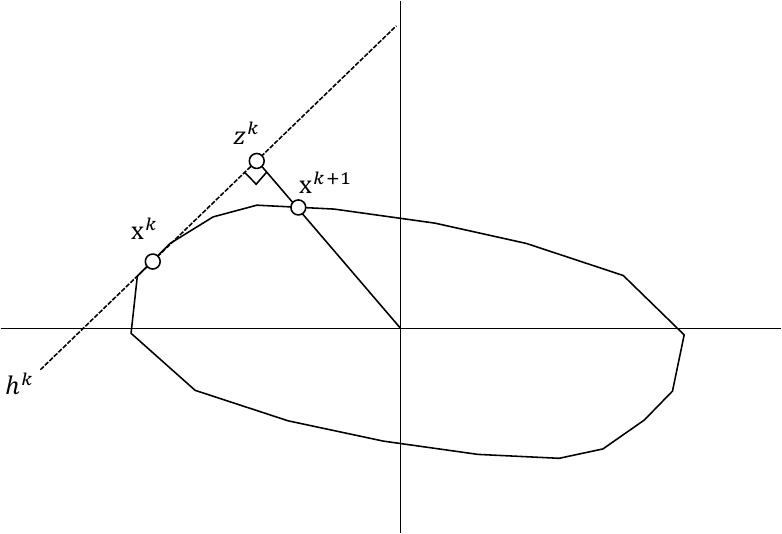}}
\caption{Geometric derivation of the algorithm}
\label{idea}
\end{center}
\vskip -0.2in
\end{figure} 

Now, we develop an algorithm based on the above idea. 
Given $\text{x}^k$, we define the normal vector $\text{y}^k$ at $\text{x}^k$ by
\begin{align}
\text{y}^k = \sum_{i=1}^n \Phi(a_i)c_i^k \label{eq:def_y}
\end{align}
using the sign vector $\text{c}^k=[c_1^k,\ldots,c_n^k]^T$ defined by
\begin{align*}
    c_i^k = \text{sgn}(\Phi(a_i)^T \text{x}^k)
\end{align*}
for $1 \leq i \leq n$. Using the normal vector $\text{y}^k$ at $\text{x}^k$, we can find the equation of hyperplane $\text{h}^k$ as
\begin{align}
(\text{y}^k)^T(\text{x}-\text{x}^k)=0. \label{eq:h^k}
\end{align}
The closest point $\text{z}^k$ to the origin from $\text{h}^k$ has the form of 
\begin{align}
\text{z}^k=s\text{y}^k. \label{eq:z-00}    
\end{align}
Plugging \eqref{eq:z-00} into \eqref{eq:h^k}, we have 
\begin{align*}
s= \frac{(\text{y}^k)^T \text{x}^k}{(\text{y}^k)^T \text{y}^k}
\end{align*}
resulting in
\begin{align}
\text{z}^k = \frac{(\text{y}^k)^T\text{x}^k}{(\text{y}^k)^T\text{y}^k} \text{y}^k. \label{eq:z-0}
\end{align}
Projecting $\text{z}^k$ to $\partial P$, we obtain
\begin{align}
\text{x}^{k+1} = \frac{\text{z}^k}{\sum_{i=1}^n |\Phi(a_i)^T \text{z}^k|}. \label{eq:def_x-z}
\end{align}

Using
\begin{align}
(\text{y}^k)^T\text{x}^k = \sum \limits_{i=1}^n \Phi(a_i)^T\text{x}^kc_i^k = \sum \limits_{i=1}^n |\Phi(a_i)^T\text{x}^k|=1, \label{eq:z}
\end{align}
we can further write \eqref{eq:z-0} as
\begin{align}
\text{z}^{k} &= \frac{\text{y}^k}{\|\text{y}^k\|_2^2} \label{eq:def_z}
\end{align}
leading to
\begin{align}
\text{x}^{k+1} = \frac{\text{y}^k}{\sum_{i=1}^n |\Phi(a_i)^T \text{y}^k|}. \label{eq:def_x-y}
\end{align}
Also, from \eqref{eq:def_y} and 
\begin{align*}
{\sum_{i=1}^n |\Phi(a_i)^T \text{y}^k|} = {\sum_{i=1}^n \Phi(a_i)^T \text{y}^k c_i^k} = (\text{c}^k)^TK\text{c}^k,
\end{align*}
we can represent $\text{x}^{k+1}$ as a function of $\text{c}^k$ by
\begin{align}
\text{x}^{k+1} = \frac{\sum_{i=1}^n \Phi(a_i)c_i^k}{(\text{c}^k)^TK\text{c}^k}. \label{eq:def_x_2}
\end{align}
Since
\begin{align*}
c_i^{k+1} & = \text{sgn}(\Phi(a_i)^T\text{x}^{k+1}) = \text{sgn} ( K_{i \cdot} \text{c}^k),
\end{align*}
we can update $c_i^{k+1}$ using only $K$ and $\text{c}^k$ by
\begin{align*}
\text{c}^{k+1} = \text{sgn}(K\text{c}^k).
\end{align*}
Moreover, from
\begin{align*}
\|\text{x}^{k+1}-\text{x}^{k}\|_2^2 = \frac{(\text{c}^k-\text{c}^{k+1})^TK(\text{c}^k-\text{c}^{k+1})}{(\text{c}^k)^TK\text{c}^k (\text{c}^{k+1})^TK\text{c}^{k+1}},
\end{align*}
we can represent the termination criteria $\text{x}^{k+1} = \text{x}^{k}$ by
\begin{align*}
(\text{c}^k-\text{c}^{k+1})^TK(\text{c}^k-\text{c}^{k+1}) = 0.
\end{align*}

On the other hand, due to non-convexity of the problem, the algorithm can be stuck at a local optimum unless it is initialized close to a global optimum. In order to obtain a good initial iterate $\text{x}^0$, we consider each $\Phi(a_j)$ and select the one such that $\Phi(a_j)/\|\Phi(a_j)\|_2$ yields the largest objective value for $f$, which is computed by
\begin{align}
\frac{\Sigma_{i=1}^n | \Phi(a_i)^T\Phi(a_j) |}{\|\Phi(a_j)\|_2} = \frac{\Sigma_{i=1}^n | K_{ij}|}{\sqrt{K_{jj}}}.
\label{eq:initialization-objective-value}
\end{align}
Once we find the index $j^*$ maximizing \eqref{eq:initialization-objective-value}, we set
\begin{align*}
\text{x}^0 = \frac{\Phi(a_{j^*})}{\sum_{i=1}^n |\Phi(a_i)^T \Phi(a_{j^*})|}
\end{align*}
resulting in
\begin{align*}
c_i^0 = \text{sgn}(\Phi(a_i)^T\text{x}^0) = \text{sgn}(\Phi(a_i)^T\Phi(a_{j^*})) = \text{sgn}(K_{ij^*}).
\end{align*}
Since an optimal loading vector $\text{x}^*$ must be located somewhere between $\Phi(a_i)$ where $1 \leq i \leq n$, the above initialization scheme is likely to yield an initial iterate $\text{x}^0$ close to the optimal loading vector $\text{x}^*$.

Summarizing all the above, we obtain Algorithm \ref{alg:L1KernelPCA}.
\vspace{-1mm}
\begin{algorithm}[ht]
   \caption{\textit{L\textsubscript{1}}-norm Kernel PCA}
   \label{alg:L1KernelPCA}
   \begin{algorithmic}
   \STATE {\bfseries Input:} kernel matrix $K$ \\
   \STATE find $j^* = \text{arg max}_{1 \leq j \leq n} {\Sigma_{i=1}^n | K_{ij}|}/{\sqrt{K_{jj}}}$ \\
   \STATE initialize the sign vector $\text{c}^0$ with $\text{c}_i^0 = \text{sgn}(K_{ij^*})$ \\
   \STATE $k \leftarrow -1$
   \REPEAT
   \STATE $k \leftarrow k+1$
   \STATE compute $\text{c}^{k+1} = \text{sgn} (K \text{c}^k)$
   \UNTIL{$(\text{c}^k-\text{c}^{k+1})^TK(\text{c}^k-\text{c}^{k+1})=0$}
   \STATE {\bfseries Output:} sign vector $\text{c}^*$ \\
   \end{algorithmic}
\end{algorithm}
\vspace{-1mm}

Once we get the output $\text{c}^{*}$ from Algorithm \ref{alg:L1KernelPCA}, we can compute principal scores with no explicit mapping. For example, the principal component of the $i^\text{th}$ observation can be computed by
\begin{align*}
\frac{\Phi(a_i)^T\text{x}^*}{\|\text{x}^*\|_2} &= \frac{\sum_{j=1}^n \Phi(a_i)^T\Phi(a_j)c_j^{*}}{\sqrt{\sum_{i=1}^n \sum_{j=1}^n \Phi(a_i)^T\Phi(a_j)c_i^{*}c_j^{*}}}  \\
&= \frac{K_{i \cdot} \text{c}^*}{\sqrt{(\text{c}^{*})^TK\text{c}^{*}}}.
\end{align*}

Also, we can proceed to find more principal components with no explicit mapping. Noting that computing a loading vector and principal components requires only the kernel matrix, it suffices to update the kernel matrix each time a new loading vector is found. Fortunately, updating the kernel matrix can be done with no explicit mapping by
\begin{align*}
\widetilde{K}_{ij} &= \Bigg(\Phi(a_i) - \frac{{\Phi(a_i)}^T \text{x}^*}{\| \text{x}^* \|_2^2} \text{x}^* \Bigg)^T \Bigg(\Phi(a_j) - \frac{{\Phi(a_j)}^T \text{x}^*}{\| \text{x}^* \|_2^2} \text{x}^* \Bigg) \\
&= \Phi(a_i)^T\Phi(a_j) - \frac{{\Phi(a_i)}^T \text{x}^*{\Phi(a_j)}^T \text{x}^*}{\| \text{x}^*\|_2^2} \\
&= K_{ij} - \frac{K_{i\cdot} \text{c}^* K_{j\cdot} \text{c}^*}{(\text{c}^*)^TK\text{c}^*},
\end{align*}
which is equivalent to
\begin{align*}
\widetilde{K} = K - \frac{(K\text{c}^*)(K\text{c}^*)^T}{(\text{c}^*)^TK\text{c}^*}
\end{align*}
in a matrix form.

From $\text{y}_k = \nabla f(\text{x}_k)$,
update rule \eqref{eq:def_x-y} can be understood as projecting a gradient $\nabla f(\text{x}_k)$ to the constraint set $\partial P$ in each iteration. In this sense, Algorithm \ref{alg:L1KernelPCA} resembles Power iteration \cite{golub2012matrix} for solving the eigenvalue problem, and interestingly, the application of our framework to the eigenvalue problem yields the same algorithm. 
The framework developed in this work such as reformulation, geometric interpretation and algorithm derivation is not specific to $L_1$-norm kernel PCA but can be extended to solve a more general problem. For example, our approach can be used to solve
\begin{align*}
\text{maximize} \quad f(\text{x}) \quad \text{subject to} \quad \|\text{x}\|_2 = 1
\end{align*}
for any function $f$ that is scale-invariant (\textit{homogeneous} or \textit{homothetic}). The application of our framework to this problem yields the following update rule
\begin{align*}
\text{x}^{k+1} \leftarrow \nabla f(\text{x}^k)/{\| \nabla f(\text{x}^k) \|_2}.
\end{align*}

Compared to the other $L_1$-norm kernel PCA algorithms \cite{kwak2013nonlinear,xiao2013l1} considering the same formulation \eqref{eq:original}, Algorithm \ref{alg:L1KernelPCA} is much simple and computationally efficient as it involves just one matrix-vector multiplication in each iteration. In the case of L1-KPCA \cite{xiao2013l1}, a system of linear equations having the form of 
\begin{align*}
K \eta = \Sigma_{j=1}^n \text{c}_j^k K_{\cdot j} 
\end{align*}
is repeatedly solved. Solving the above linear system is not only computationally costly but also numerically unstable since it is singular due to the presence of non-trivial solution $\text{c}^k$.
On the other hand, KPCA-L1 \cite{kwak2013nonlinear} requires one matrix-vector multiplication but it does not directly consider the kernel matrix $K$. Instead, the eigenvalue decomposition of the kernel matrix $K=U\Lambda U^T$ must be computed before starting to find each loading vector. Also, $U\Lambda^{1/2}$ is involved in computation instead of the kernel matrix $K$. As Algorithm \ref{alg:L1KernelPCA} entails neither solving a linear system nor computing the eigenvalue decomposition of $K$, it is computationally more efficient than the other algorithms.

When it comes to initialization, L1-KPCA \cite{xiao2013l1} uses the optimal loading vector from $L_2$-norm kernel PCA. While KPCA-L1 \cite{kwak2013nonlinear} finds the data vector having the largest norm and uses its normalization for the initial iterate, Algorithm \ref{alg:L1KernelPCA} finds the normalized data vector with the largest objective value for $f$ and set it to be the initial iterate. As the initialization scheme of Algorithm \ref{alg:L1KernelPCA} is based on the objective fucntion $f$ while the others are not, it is more likely to obtain a good initial iterate compared to the others.

\section{Convergence Analysis} \label{convergence}
In this section, we provide a convergence analysis of Algorithm \ref{alg:L1KernelPCA}. We first prove that the algorithm converges in a finite number of iterations, and then provide a rate of convergence analysis. Before proving the finite convergence of the algorithm, we first show that the sequence $\{\|\text{x}^k\|_2\}$ generated by Algorithm \ref{alg:L1KernelPCA} is non-increasing.
\begin{lemma} \label{lemma1}
Let $\{\textup{x}_k\}$ and $\{\textup{z}_k\}$ be a sequence of vectors generated by Algorithm \ref{alg:L1KernelPCA} and \eqref{eq:def_z}, respectively. Then, we have 
\begin{align*}
\| \textup{x}^{k+1} \|_2 \leq \| \textup{z}^{k} \|_2 \leq \| \textup{x}^{k} \|_2.
\end{align*}
Moreover, if $\| \textup{x}^{k} \|_2 = \| \textup{z}^{k} \|_2$, we have $\text{x}^k = r \text{y}^k$ for some $r \in \mathbb{R}$.
\end{lemma}
\begin{proof}
The inequality $\| \text{z}^{k} \|_2 \leq \| \text{x}^{k} \|_2$ follows from
\begin{align}
\| \text{x}^{k} \|_2^2 - \| \text{z}^{k} \|_2^2 &= \| \text{x}^{k} \|_2^2 - \frac{1}{\| \text{y}^{k} \|_2^2} \nonumber\\
&= \| \text{x}^{k} \|_2^2 - \frac{((\text{y}^k)^T \text{x}^k)^2}{\| \text{y}^{k} \|_2^2} \nonumber \\
&=\frac{\| \text{x}^{k} \|_2^2\| \text{y}^{k} \|_2^2 - ((\text{y}^k)^T \text{x}^k)^2}{\| \text{y}^{k} \|_2^2} \nonumber \\
&\geq 0 \nonumber
\end{align}
where the second equality holds follows from \eqref{eq:z} and the last inequality holds due to the Cauchy-Schwarz inequality. If $\| \textup{x}^{k} \|_2 = \| \textup{z}^{k} \|_2$, the Cauchy-Schwarz inequality becomes an equality resulting in
\begin{align*}
\text{x}^k = r \text{y}^k    
\end{align*}
for some $r \in \mathbb{R}$.

Next, from \eqref{eq:def_x-z}, we have
\begin{align}
\|\text{x}^{k+1} \|_2^2 = \frac{\|\text{z}^k\|_2^2}{(\sum_{i=1}^n |\Phi(a_i)^T\text{z}^k|)^2}.
\label{eq:lemma1-2nd}
\end{align}
Using \eqref{eq:z-0}, we can represent the denominator as
\begin{align*}
\sum \limits_{i=1}^n |\Phi(a_i)^T\text{z}^k| &= \frac{\sum_{i=1}^n |\Phi(a_i)^T\text{y}^k|}{(\text{y}^k)^T \text{y}^k}.
\end{align*}
From
\begin{align*}
\sum_{i=1}^n |\Phi(a_i)^T\text{y}^k| &= \sum_{i=1}^n |\Phi(a_i)^T(\sum_{j=1}^n \Phi(a_j)c_j^k)| \\
&= \sum_{i=1}^n |\sum_{j=1}^n \Phi(a_i)^T\Phi(a_j)c_i^kc_j^k|
\end{align*}
and
\begin{align*}
(\text{y}^k)^T\text{y}^k = \sum_{i=1}^n \sum_{j=1}^n \Phi(a_i)^T\Phi(a_j)c_i^kc_j^k,
\end{align*}
we obtain
\begin{align}
\sum \limits_{i=1}^n |\Phi(a_i)^T\text{z}^k| &= \frac{\sum_{i=1}^n |\sum_{j=1}^n \Phi(a_i)^T\Phi(a_j)c_i^kc_j^k|}{\sum_{i=1}^n \sum_{j=1}^n \Phi(a_i)^T\Phi(a_j)c_i^kc_j^k} \label{eq:rate}
\end{align}
resulting in
\begin{align}
\sum \limits_{i=1}^n |\Phi(a_i)^T\text{z}^k| \geq 1. \label{eq:7}
\end{align}
By \eqref{eq:lemma1-2nd} and \eqref{eq:7}, we have
\begin{align*}
\|\text{x}^{k+1} \|_2^2 \leq \|\text{z}^k\|_2^2.
\end{align*}

\end{proof}
\begin{lemma} \label{lemma2}
If 
\begin{align*}
\| \textup{x}^{k} \|_2 = \| \textup{x}^{k+1} \|_2,
\end{align*}
then, we have 
\begin{align*}
\textup{x}^k =\frac{\textup{y}^k}{\|\textup{y}^k\|_2^2}, \quad \textup{y}^k =\frac{\textup{x}^k}{\|\textup{x}^k\|_2^2},
\end{align*}
resulting in 
\begin{align*}
\textup{x}^{k} = \textup{x}^{k+1}.
\end{align*}
\end{lemma}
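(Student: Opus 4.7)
The plan is to exploit the chain $\|\textup{x}^{k+1}\|_2 \le \|\textup{z}^{k}\|_2 \le \|\textup{x}^{k}\|_2$ established in Lemma \ref{lemma1}. Under the hypothesis $\|\textup{x}^{k}\|_2 = \|\textup{x}^{k+1}\|_2$, both inequalities must collapse to equalities, and I will analyze the equality conditions separately.

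For the first equality $\|\textup{z}^{k}\|_2 = \|\textup{x}^{k}\|_2$, inspecting the derivation in \eqref{eq:Cauchy} shows that the Cauchy--Schwarz step was the only source of slack. Equality in Cauchy--Schwarz forces $\textup{x}^{k}$ to be a scalar multiple of $\textup{y}^{k}$. Writing $\textup{x}^{k} = \alpha \textup{y}^{k}$ and substituting into the feasibility identity $(\textup{y}^{k})^{T}\textup{x}^{k} = 1$ from \eqref{eq:z}, I solve $\alpha \|\textup{y}^{k}\|_2^{2} = 1$, which pins down $\alpha = 1/\|\textup{y}^{k}\|_2^{2}$ and hence $\textup{x}^{k} = \textup{y}^{k}/\|\textup{y}^{k}\|_2^{2}$. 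Taking norms yields $\|\textup{x}^{k}\|_2 \|\textup{y}^{k}\|_2 = 1$, and rearranging gives the second asserted identity $\textup{y}^{k} = \textup{x}^{k}/\|\textup{x}^{k}\|_2^{2}$.

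For the second equality $\|\textup{x}^{k+1}\|_2 = \|\textup{z}^{k}\|_2$, I use the defining relation \eqref{eq:def_x2}, namely $\textup{x}^{k+1} = (1/\sum_{i} |\Phi(a_i)^{T}\textup{z}^{k}|)\, \textup{z}^{k}$, together with the lower bound $\sum_{i} |\Phi(a_i)^{T}\textup{z}^{k}| \geq 1$ from \eqref{eq:7}. Equality in norms forces the scaling factor to be exactly $1$, so $\textup{x}^{k+1} = \textup{z}^{k}$. Combining with $\textup{z}^{k} = \textup{y}^{k}/\|\textup{y}^{k}\|_2^{2} = \textup{x}^{k}$ from the first step gives $\textup{x}^{k+1} = \textup{x}^{k}$.

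The main thing to be careful about is isolating the two independent sources of slack in the Lemma \ref{lemma1} chain and verifying that the sign-vector-induced quantity $(\textup{y}^{k})^{T}\textup{x}^{k}$ really does equal $1$ at the current iterate; once those bookkeeping points are in place, the argument reduces to the Cauchy--Schwarz equality condition plus a single scalar identification.
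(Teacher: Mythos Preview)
Your proposal is correct and follows essentially the same approach as the paper: both arguments use the chain from Lemma~\ref{lemma1}, invoke the Cauchy--Schwarz equality condition in \eqref{eq:Cauchy} to deduce $\textup{x}^k = \textup{y}^k/\|\textup{y}^k\|_2^2$, and then conclude $\textup{x}^{k+1}=\textup{x}^k$ via $\textup{z}^k=\textup{x}^k$. The only cosmetic difference is that the paper, after obtaining $\textup{z}^k=\textup{x}^k$, appeals directly to the feasibility $\sum_i |\Phi(a_i)^T\textup{x}^k|=1$ rather than separately analyzing the second inequality $\|\textup{x}^{k+1}\|_2\le\|\textup{z}^k\|_2$.
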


\begin{proof}
Since $\| \textup{x}^{k} \|_2 = \| \textup{x}^{k+1} \|_2$, we have 
\begin{align*}
\| \text{z}^{k} \|_2 = \| \text{x}^{k} \|_2, \quad \text{x}^k=r\text{y}^k
\end{align*}
by Lemma \ref{lemma1} for some $r \in \mathbb{R}$. Using \eqref{eq:z}, we have
\begin{align*}
r=\frac{1}{\|\text{y}^k\|_2^2}
\end{align*}
 resulting in 
\begin{align*}
\textup{x}^k=\frac{\textup{y}^k}{\|\textup{y}^k\|_2^2}.
\end{align*}
In the same way, we can show 
\begin{align*}
\textup{y}^k=\frac{\textup{x}^k}{\|\textup{x}^k\|_2^2}.
\end{align*}
Since this implies $\text{z}^k=\text{x}^k$ by \eqref{eq:def_z}, we finally have
\begin{align*}
\text{x}^{k+1} = \frac{\text{z}^k }{\sum_{i=1}^n |\Phi(a_i)^T\text{z}^k|} 
= \frac{\text{x}^k }{\sum_{i=1}^n |\Phi(a_i)^T\text{x}^k|} 
= \text{x}^{k}
\end{align*}
where the first equality follows from \eqref{eq:def_x-z} and the last equality holds from the feasibility of $\text{x}^{k}$.
\end{proof}

\begin{theorem} \label{proof_convergence}
The sequence $\{\textup{x}^k\}$ converges in a finite number of steps.
\label{thm:finite-convergence}
\end{theorem}

\begin{proof}
Suppose the sequence $\{\text{x}^k\}$ does not converge.
As an iterate $\text{x}^k$ is solely determined by a sign vector $\text{c}^k \in \{-1,+1\}^n$, the number of possible vectors that $\text{x}^k$ can take is finite. Therefore, if the sequence $\{\text{x}^k\}$ does not converge, some vectors must appear more than once. Without loss of generality, let $\text{x}^l = \text{x}^{l+m}$. By Lemma \ref{lemma1}, we have
\begin{align*}
\| \text{x}^{l+m} \|_2 = \| \text{x}^l \|_2 \geq \| \text{x}^{l+1} \|_2 \geq ... \geq \| \text{x}^{l+m} \|_2
\end{align*}
forcing us to have 
\begin{align*}
\| \text{x}^l \|_2 = \| \text{x}^{l+1} \|_2 = ... = \| \text{x}^{l+m} \|_2. \label{eq:thm1-x-eq}
\end{align*}
This implies
\begin{align*}
\text{x}^l = \text{x}^{l+1} = ... = \text{x}^{l+m}
\end{align*}
by Lemma \ref{lemma2}, contradicting the assumption that the sequence $\{\text{x}^k\}$ does not converge. Therefore, the sequence $\{\text{x}^k\}$ generated by Algorithm \ref{alg:L1KernelPCA} must converge in a finite number of steps.
\end{proof}

Next, we show that the sequence of $\{\|\text{x}^k\|_2\}$ generated by Algorithm \ref{alg:L1KernelPCA} converges at a linear rate. Although Theorem \ref{thm:finite-convergence} shows that the algorithm converges in a finite number of steps, it may take an exponential number of steps to converge, due to the combinatorial structure of the problem, making it not appropriate in a large-scale setting. To make sure that this does not happen for Algorithm \ref{alg:L1KernelPCA}, we additionally prove linear convergence, which ensures that the optimality gap decreases no worse than a certain rate $\rho < 1$. Since this result implies that an $\epsilon$-optimal local solution can be attained after $\mathcal{O}(1/(1-\rho) \text{ log} (1/\epsilon))$ iterations, we can obtain a near-optimal solution after a sufficient number of iterations without waiting for an exponential number of steps.

\begin{theorem} \label{rate_convergence}
Let Algorithm \ref{alg:L1KernelPCA} start from $\textup{x}^0$ and terminate with $\textup{x}^*$ at iteration $k^*$. Then, for some $\rho < 1$, we have
\begin{align*}
\| \textup{x}^{k} \|_2 - \| \textup{x}^{*} \|_2 \leq \rho^{k} (\| \textup{x}^{0} \|_2 - \| \textup{x}^{*} \|_2)
\end{align*}
for $k < {k^*}$.
\end{theorem}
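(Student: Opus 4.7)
The plan is to reduce the theorem to establishing a per-step contraction
\begin{align*}
\|\textup{x}^{k+1}\|_2 - \|\textup{x}^*\|_2 \leq \rho_k \bigl( \|\textup{x}^k\|_2 - \|\textup{x}^*\|_2 \bigr)
\end{align*}
with $\rho_k < 1$ for every $k<k^*$, and then exploit finiteness of the iteration count (Theorem \ref{proof_convergence}) to replace the iteration-dependent constants by a single $\rho = \max_{0\leq k<k^*}\rho_k<1$.

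The key input is strict monotonicity of the norms before termination. For any $k<k^*$, the algorithm has not yet stopped, so $\textup{x}^{k+1}\neq \textup{x}^k$. Lemma \ref{lemma1} then gives $\|\textup{x}^{k+1}\|_2 \leq \|\textup{x}^k\|_2$, and the contrapositive of Lemma \ref{lemma2} upgrades this to the strict inequality $\|\textup{x}^{k+1}\|_2 < \|\textup{x}^k\|_2$. Applying the same observation to every index between $k$ and $k^*$, together with $\textup{x}^* = \textup{x}^{k^*}$, yields $\|\textup{x}^k\|_2 > \|\textup{x}^*\|_2$ for $k<k^*$ and $\|\textup{x}^{k+1}\|_2 \geq \|\textup{x}^*\|_2$. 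Consequently the ratio
\begin{align*}
\rho_k \;=\; \frac{\|\textup{x}^{k+1}\|_2 - \|\textup{x}^*\|_2}{\|\textup{x}^k\|_2 - \|\textup{x}^*\|_2}
\end{align*}
is well defined and lies in $[0,1)$: nonnegativity from Lemma \ref{lemma1}, strictness from the strict descent just established.

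From here the argument is mechanical. Theorem \ref{proof_convergence} guarantees that $k^*$ is finite, so $\{\rho_0,\rho_1,\dots,\rho_{k^*-1}\}$ is a finite subset of $[0,1)$, and $\rho := \max_{0\leq k<k^*}\rho_k$ therefore satisfies $\rho<1$. Induction on $k$ gives
\begin{align*}
\|\textup{x}^k\|_2 - \|\textup{x}^*\|_2 \;\leq\; \rho\bigl(\|\textup{x}^{k-1}\|_2 - \|\textup{x}^*\|_2\bigr) \;\leq\; \cdots \;\leq\; \rho^k \bigl(\|\textup{x}^0\|_2 - \|\textup{x}^*\|_2\bigr),
\end{align*}
which is exactly the claimed bound.

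The main obstacle is the strict descent $\|\textup{x}^{k+1}\|_2 < \|\textup{x}^k\|_2$ for $k<k^*$: Lemma \ref{lemma1} alone only gives a weak inequality, so one has to carefully invoke Lemma \ref{lemma2} and the fact that $\textup{x}^{k+1}=\textup{x}^k$ would force the sign vectors to coincide and hence trigger the termination test. Once this strictness is secured, the linear-rate conclusion reduces to the elementary observation that a maximum of finitely many numbers in $[0,1)$ is itself strictly less than $1$.
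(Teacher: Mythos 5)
Your proof is logically sound, but it takes a genuinely different and strictly weaker route than the paper. You define the contraction factor \emph{a posteriori} as the realized per-step ratio $\rho_k = (\|\textup{x}^{k+1}\|_2-\|\textup{x}^*\|_2)/(\|\textup{x}^{k}\|_2-\|\textup{x}^*\|_2)$ and take the maximum over the finitely many iterations of one run. Once strict descent before termination is established (which you do correctly via Lemma \ref{lemma1} plus the contrapositive of Lemma \ref{lemma2} and the observation that $\textup{x}^{k+1}=\textup{x}^k$ would force $\boldsymbol{c}^{k+1}=\boldsymbol{c}^k$ and hence termination), this argument is essentially the tautology that any finite strictly decreasing sequence admits such a geometric envelope; the resulting $\rho$ depends on the particular trajectory and starting point. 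The paper instead derives an explicit per-step contraction: from \eqref{eq:def_x2}, \eqref{eq:convieq1} and \eqref{eq:7} it shows
\begin{align*}
\| \textup{x}^{k} \|_2 - \| \textup{x}^{*}\|_2 \leq \Big(1/ \sum \limits_{i=1}^n |\Phi(a_i)^T\textup{z}^{k-1}|\Big) \big(\| \textup{x}^{k-1}\|_2 - \| \textup{x}^{*} \|_2\big),
\end{align*}
identifies the factor as $\rho(\boldsymbol{c}^{k-1}) = \boldsymbol{c}^TK\boldsymbol{c} / \sum_i |(K\boldsymbol{c})_i|$, a function of the sign vector alone, and sets $\rho = \max_{\boldsymbol{c}\in\{-1,1\}^n}\rho(\boldsymbol{c})$ subject to $\rho(\boldsymbol{c})<1$. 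That $\rho$ is a single data-dependent constant valid for every initialization, which is the substantive content of a ``linear rate'' claim; your $\rho$ certifies the literal inequality in the theorem statement but carries no information beyond finite strict descent. If you want the stronger, initialization-independent constant, you need the computable factor $1/\sum_i |\Phi(a_i)^T\textup{z}^{k-1}|$ and the finiteness of $\{-1,1\}^n$ rather than the finiteness of the iteration count.
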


\begin{proof} From \eqref{eq:def_x-z}, we have 
\begin{align*}
\| \text{x}^{k} \|_2 = \frac{\|\text{z}^{k-1}\|_2}{\sum_{i=1}^n |\Phi(a_i)^T\text{z}^{k-1}|}.
\end{align*}
Since $\| \text{z}^{k-1} \|_2 \leq \| \text{x}^{k-1} \|_2$ holds by Lemma \ref{lemma1}, we obtain
\begin{equation} \label{eq:convieq1}
	\| \text{x}^{k} \|_2 \leq \frac{\|\text{x}^{k-1}\|_2}{\sum_{i=1}^n |\Phi(a_i)^T\text{z}^{k-1}|}. 
\end{equation}
Subtracting $\|\text{x}^*\|_2$ to \eqref{eq:convieq1}, we have
\begin{align}
\| \text{x}^{k} \|_2 - \| \text{x}^{*} \|_2 &\leq \frac{\|\text{x}^{k-1}\|_2}{\sum_{i=1}^n |\Phi(a_i)^T\text{z}^{k-1}|} - \| \text{x}^{*} \|_2 \nonumber \\
&\leq \frac{1}{\sum_{i=1}^n |\Phi(a_i)^T\text{z}^{k-1}|} (\| \text{x}^{k-1}\|_2 - \| \text{x}^{*} \|_2) \label{eq:convieq2}
\end{align}
where the last inequality follows from \eqref{eq:7}.

By induction on \eqref{eq:convieq2}, we obtain
\begin{align}
\| \text{x}^{k} \|_2 - \| \text{x}^{*}\|_2 \leq (\| \text{x}^0\|_2 - \| \text{x}^{*} \|_2) \prod_{l=1}^{k} \frac{1}{\sum_{i=1}^n |\Phi(a_i)^T\text{z}^{l-1}|}. \label{eq:linearRate}
\end{align}
From \eqref{eq:7}, we know that
\begin{align*}
\sum_{i=1}^n |\Phi(a_i)^T\text{z}^{l-1}| \geq  1.
\end{align*}
If
\begin{align*}
\sum_{i=1}^n |\Phi(a_i)^T\text{z}^{l-1}|=1,
\end{align*}
we have 
\begin{align*}
\frac{\sum_{i=1}^n |\sum_{j=1}^n \Phi(a_i)^T\Phi(a_j)c_i^{l-1}c_j^{l-1}|}{\sum_{i=1}^n \sum_{j=1}^n \Phi(a_i)^T\Phi(a_j)c_i^{l-1}c_j^{l-1}} = 1
\end{align*}
resulting in
\begin{align*}
c_i^{l-1} = \text{sgn} \Big( \sum_{j=1}^n \Phi(a_i)^T\Phi(a_j)c_j^{l-1} \Big).
\end{align*}
Since this implies
\begin{align*}
\text{c}^{l} = \text{sgn} (K \text{c}^{l-1}) = \text{c}^{l-1},
\end{align*}
we have
\begin{align*}
\text{x}^{l} = \text{x}^{l+1}.
\end{align*}
Therefore, as long as $l < k^*$, we must have 
\begin{align*}
\sum_{i=1}^n |\Phi(a_i)^T\text{z}^{j-1}| > 1.
\end{align*}

For $\text{c} \in \{-1,1\}^n$, let
\begin{align*}
\rho(\text{c}) = \frac{\sum_{i=1}^n \sum_{j=1}^n \Phi(a_i)^T\Phi(a_j)c_ic_j}{\sum_{i=1}^n |\sum_{j=1}^n \Phi(a_i)^T\Phi(a_j)c_ic_j|}
\end{align*}
and define
\begin{align*}
\rho = \text{max}_{\text{c}\in \{-1,1\}^n} \rho(\text{c}) \text{ subject to } \rho(\text{c})<1.
\end{align*}
Then, for $l<k^*$, we have
\begin{align*}
\frac{1}{\sum_{i=1}^n |\Phi(a_i)^T\text{z}^{j-1}|} = \rho(\text{c}^{j-1}) < \rho < 1.
\end{align*}
By combining it with \eqref{eq:linearRate}, we get the desired result.
\end{proof}
As shown in Theorem \ref{rate_convergence}, no matter where the algorithm starts, the sequence of objective values of \eqref{eq:new} converges at a linear rate. Now, we show that we can obtain a local optimal solution of \eqref{eq:original} by scaling the output of Algorithm \ref{alg:L1KernelPCA}.
\begin{theorem} \label{optimality}
Let the output of Algorithm \ref{alg:L1KernelPCA} be $\textup{x}^*$. Then, 
\begin{align*}
\bar{\textup{x}}^*= \frac{\textup{x}^*}{\|\textup{x}^*\|_2}
\end{align*}
is a local optimal solution of \eqref{eq:original}.
\end{theorem}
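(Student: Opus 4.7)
The plan is to localize: on a small neighborhood of $\hat{\textup{x}}^*$ on the unit sphere, $f$ collapses to the linear functional $\textup{x}\mapsto(\textup{y}^*)^T\textup{x}$, and on the sphere this linear functional is maximized exactly at $\hat{\textup{x}}^*$ by Cauchy-Schwarz. Everything hinges on extracting the right form of $\hat{\textup{x}}^*$ from the termination of Algorithm \ref{alg:L1KernelPCA}.

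First I would use the stopping criterion to pin down $\hat{\textup{x}}^*$. Since Algorithm \ref{alg:L1KernelPCA} terminates at $\textup{x}^*$, the condition $(\boldsymbol{c}^k-\boldsymbol{c}^{k+1})^TK(\boldsymbol{c}^k-\boldsymbol{c}^{k+1})=0$ together with the sign update rule force the final vector $\boldsymbol{c}^*$ to satisfy $c_i^*=\textup{sgn}(\Phi(a_i)^T\textup{y}^*)$ for every $i$, where $\textup{y}^*=\sum_i\Phi(a_i)c_i^*$. This immediately gives $\sum_i|\Phi(a_i)^T\textup{y}^*|=\sum_i c_i^*\Phi(a_i)^T\textup{y}^*=\|\textup{y}^*\|_2^2$, and combined with Corollary \ref{corollary1} we obtain $\textup{x}^*=\textup{y}^*/\|\textup{y}^*\|_2^2$, so $\hat{\textup{x}}^*=\textup{y}^*/\|\textup{y}^*\|_2$.

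Next I would linearize $f$ locally. In the generic case where $\Phi(a_i)^T\textup{y}^*\neq 0$ for every $i$, continuity of $\textup{x}\mapsto\Phi(a_i)^T\textup{x}$ gives a neighborhood $N$ of $\hat{\textup{x}}^*$ on which $\textup{sgn}(\Phi(a_i)^T\textup{x})=c_i^*$ for every $i$. Thus $f(\textup{x})=\sum_i c_i^*\Phi(a_i)^T\textup{x}=(\textup{y}^*)^T\textup{x}$ for all $\textup{x}\in N$ with $\|\textup{x}\|_2=1$. Applying Cauchy-Schwarz on the unit sphere then yields $(\textup{y}^*)^T\textup{x}\leq\|\textup{y}^*\|_2=(\textup{y}^*)^T\hat{\textup{x}}^*=f(\hat{\textup{x}}^*)$, with equality precisely at $\textup{x}=\hat{\textup{x}}^*$. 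This is exactly the statement of local optimality for \eqref{eq:original}.

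The main obstacle is the degenerate case in which $\Phi(a_i)^T\textup{y}^*=0$ for some $i$. There the sign $c_i^*$ is not forced by $\textup{y}^*$ and the local linearization of $f$ fails: the contribution $|\Phi(a_i)^T\textup{x}|$ is only first-order in the perturbation and could in principle compete with the second-order decrease of $(\textup{y}^*)^T\textup{x}$ along the sphere. I would address this either by invoking a mild genericity assumption on the data, or by restricting the neighborhood to directions that preserve the sign pattern on the index set where $\Phi(a_i)^T\hat{\textup{x}}^*\neq 0$ and combining the Cauchy-Schwarz step with a direct subdifferential inclusion at $\hat{\textup{x}}^*$; this is the only delicate piece and the one where I would need to be most careful in a full write-up.
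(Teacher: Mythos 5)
Your proof is correct in substance but takes a genuinely different route from the paper. The paper argues via the Lagrangian $L(\lambda,\textup{x})=\sum_i|\Phi(a_i)^T\textup{x}|-\lambda(\|\textup{x}\|_2^2-1)$: using $\textup{y}^*=\textup{x}^*/\|\textup{x}^*\|_2^2$ (from Lemma \ref{lemma2}), it shows $\nabla_\textup{x}L(\lambda^*,\hat{\textup{x}}^*)=0$ for $\lambda^*=\tfrac{1}{2\|\textup{x}^*\|_2}$ and then invokes $\nabla_{\textup{x}\textup{x}}L=-2\lambda^*I\prec 0$ as a second-order sufficient condition. You instead linearize $f$ on a sign-preserving neighborhood and finish with Cauchy--Schwarz on the sphere. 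Your version buys something real: it yields a \emph{strict} local maximum with an explicit neighborhood, and it avoids the somewhat delicate step of applying smooth second-order sufficient conditions to a piecewise-linear objective (note the paper's Hessian comes entirely from the constraint term, since the linearized $f$ has zero curvature). Crucially, the degenerate case you flag --- some $\Phi(a_i)^T\textup{y}^*=0$ --- is not handled by the paper either: writing $\nabla_\textup{x}L$ with $\text{sgn}(\Phi(a_i)^T\textup{x})$ already presumes differentiability at $\hat{\textup{x}}^*$, i.e.\ exactly the non-degeneracy you assume. Indeed, in that case a perturbation flipping the sign of a zero term gains $|\Phi(a_i)^T\textup{x}|-c_i^*\Phi(a_i)^T\textup{x}$ at first order while $(\textup{y}^*)^T\textup{x}$ only loses at second order, so local optimality can genuinely fail; your instinct that this is the delicate piece is right, and it is a gap shared with (not created relative to) the paper. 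Two small polish items: the identity $\textup{x}^*=\textup{y}^*/\|\textup{y}^*\|_2^2$ at termination is most directly justified by the update formula \eqref{eq:def_x} together with $\sum_i|\Phi(a_i)^T\textup{y}^*|=\|\textup{y}^*\|_2^2$ (or by Lemma \ref{lemma2}) rather than by Corollary \ref{corollary1}, which concerns global optima of \eqref{eq:new}; and you should note that the stopping test only forces $\textup{y}^k=\textup{y}^{k+1}$ (not $\boldsymbol{c}^k=\boldsymbol{c}^{k+1}$ when the $\Phi(a_i)$ are dependent), which is still enough since you only ever use $\textup{y}^*$ and the sign identity $c_i^{k+1}=\text{sgn}(\Phi(a_i)^T\textup{y}^k)$.
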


\begin{proof}
It is easy to see that $\bar{\textup{x}}^*$ is feasible. Since $\text{x}^*$ is the output of Algorithm \ref{alg:L1KernelPCA}, 
\begin{align*}
\text{y}^* = \frac{\text{x}^*}{\|\text{x}^*\|_2^2}
\end{align*}
holds by Lemma \ref{lemma2}. Next, consider 
\begin{align*}
L(\lambda,\text{x})= \sum\limits_{i=1}^n |\Phi(a_i)^T\text{x}| - \lambda(\|\text{x}\|_2^2-1).
\end{align*}
From 
\begin{align*}
\nabla_{\text{x}} L(\lambda,\text{x})=  \sum\limits_{i=1}^n \text{sgn}(\Phi(a_i)^T\text{x})\Phi(a_i)-2\lambda \text{x},
\end{align*}
we have
\begin{align*}
\nabla_{\text{x}} L(\lambda,\bar{\textup{x}}^*) &=  \sum\limits_{i=1}^n \text{sgn}(\Phi(a_i)^T\bar{\textup{x}}^*)\Phi(a_i)-2\lambda \bar{\textup{x}}^* \\
&= \sum\limits_{i=1}^n \text{sgn}(\Phi(a_i)^T\textup{x}^*)\Phi(a_i)-2\lambda \bar{\textup{x}}^*\\
&= \text{y}^*-2\lambda\bar{\textup{x}}^* \\
&= \frac{\text{x}^*}{\|\text{x}^*\|_2^2}-2\lambda\bar{\textup{x}}^* \\
&= \bigg( \frac{1}{\|\textup{x}^*\|_2}-2\lambda \bigg) \bar{\textup{x}}^*.
\end{align*}
Therefore, with 
\begin{align*}
\lambda^* = \frac{1}{2\|\textup{x}^*\|_2},
\end{align*}
we have 
\begin{align*}
\nabla_{\text{x}} L(\lambda^*,\bar{\textup{x}}^*)=0,
\end{align*}
meaning that $(\lambda^*,\bar{\textup{x}}^*)$ satisfies the first-order necessary conditions. Moreover, from 
\begin{align*}
\nabla_{\text{x}\text{x}} L(\lambda^*,\bar{\textup{x}}^*)=-2\lambda^*I \prec 0,
\end{align*}
the second-order sufficient condition is also satisfied. Since $(\lambda^*,\bar{\textup{x}}^*)$ satisfies the first and second order conditions, from the theory of constrained optimization, $\bar{\textup{x}}^*$ is a local optimal solution of \eqref{eq:original}. 
\end{proof}

\section{Experimental Results} \label{experiment}
In this section, we assess the robustness and scalability of Algorithm \ref{alg:L1KernelPCA} by running it on several tasks and compare it with other kernel PCA algorithms. 
First, we apply them on datasets having entry-wise perturbations and investigate how well each algorithm extracts principal components in a noisy setting. 
Next, we introduce their application to outlier detection and compare their performance with other popular outlier detection models.
Lastly, we provide their runtime comparison.

In addition to Algorithm \ref{alg:L1KernelPCA}, the two other $L_1$-norm kernel PCA algorithms (KPCA-L1\cite{kwak2013nonlinear}, L1-KPCA\cite{xiao2013l1}), the kernel version of $R_1$-norm PCA (R1-KPCA\cite{ding2006r}) and  $L_2$-norm kernel PCA (L2-KPCA\cite{scholkopf1997kernel}) are considered in the experiments. 
While $R_1$-norm PCA\cite{ding2006r} is not originally designed to incorporate kernels, we include it as it is easy to develop a kernel variant. Other $L_1$-norm PCA algorithms were also considered but since it is not straightforward to develop a kernel version for them, they are disregarded.

\subsection{Robust Extraction of PCs}
\label{subsec:robust_extraction}
To measure robustness, we first run the algorithms on datasets having entry-wise perturbations (noisy datasets) to obtain loading vectors. After that, we compute how much variation in the perturbation-excluded datasets (normal datasets) is explained by the loading vectors obtained from the noisy datasets. For this experiment, we prepare synthetic datasets having entry-wise perturbations so that loading vectors obtained by running $L_2$-norm kernel PCA on noisy and normal datasets are different from each other.

To generate synthetic datasets, we first construct a $1000 \times 50$ data matrix with the rank of 10 following the data generation procedure in \cite{park2016iteratively}. While the largest size in \cite{park2016iteratively} is $300 \times 50$, we choose the size of $1000 \times 50$ to consider larger datasets. To obtain entry-wise perturbations, we corrupt $r\%$ of observations by adding some random noises. We refer to the resulting dataset as a noisy dataset and the noisy dataset without the entry-wise perturbations as a normal dataset. For each value of $r \in \{5,10,15,20,25,30\}$, we generate 10 instances.

Let $K$ denote a kernel matrix of a normal dataset and $\text{x}_1,\ldots,\text{x}_p$ be $p$ loading vectors obtained by running $L_2$-norm kernel PCA on $K$. Also, let $\tilde K$ be a kernel matrix of a noisy dataset and $\tilde{\text{x}}_1,\ldots,\tilde{\text{x}}_p$ be loading vectors obtained by running one of the kernel PCA algorithms (Algorithm \ref{alg:L1KernelPCA}, KPCA-L1, L1-KPCA, R1-KPCA, L2-KPCA) on $\tilde K$. Assuming that the normal dataset is standardized,
\begin{align}
\sum\limits_{j=1}^p \sum\limits_{i=1}^n (\Phi(a_i)^T \tilde{\text{x}}_j)^2 = \sum\limits_{j=1}^p \tilde{\text{x}}_j^TK \tilde{\text{x}}_j \label{eq:explained_variance}
\end{align}
represents the amount of variation in the normal dataset explained by the $p$ loading vectors $\tilde{\text{x}}_1,\ldots,\tilde{\text{x}}_p$ where $n$ is the number of observations in the normal dataset. After dividing \eqref{eq:explained_variance} by $\sum_{j=1}^p {\text{x}}_j^TK {\text{x}}_j$, which is the maximum amount of variation in the normal dataset that the $p$ orthogonal vectors can explain, and multiplying by 100, we get the following measure:
\begin{align}
& \text{(Total Explained Variation)} & 100 \times \frac{\sum_{j=1}^p \tilde{\text{x}}_j^TK \tilde{\text{x}}_j}{\sum_{j=1}^p {\text{x}}_j^TK {\text{x}}_j}. \label{eq:explained_variance_scaled}
\end{align}

Metric \eqref{eq:explained_variance_scaled} captures how well the loading vectors obtained from the noisy dataset explain variation in the normal dataset with respect to the $L_2$-norm. Therefore, it can be used to measure the robustness of each kernel PCA algorithm in the presence of entry-wise perturbations. 
For example, if one algorithm has a value close to one, then it is robust with respect to entry-wise perturbations. 
Using this metric, we compare the robustness of Algorithm \ref{alg:L1KernelPCA} with that of KPCA-L1, L1-KPCA, R1-KPCA, and L2-KPCA. For each value of $r$, we compute \eqref{eq:explained_variance_scaled} for the ten datasets with $p=4$ and average them. We arbitrarily choose $p=4$ since the result is consistent regardless of the choice of $p$. Figure \ref{fig:robustness_linear} shows the results for the linear kernel and Figure \ref{fig:robustness} shows the results for the Gaussian kernel with the width parameter $\sigma$ varying from $10$ to $25$.

\begin{figure}[ht]
\centering
 \includegraphics[trim={1.5cm 0 1cm 0},clip,scale=0.35]{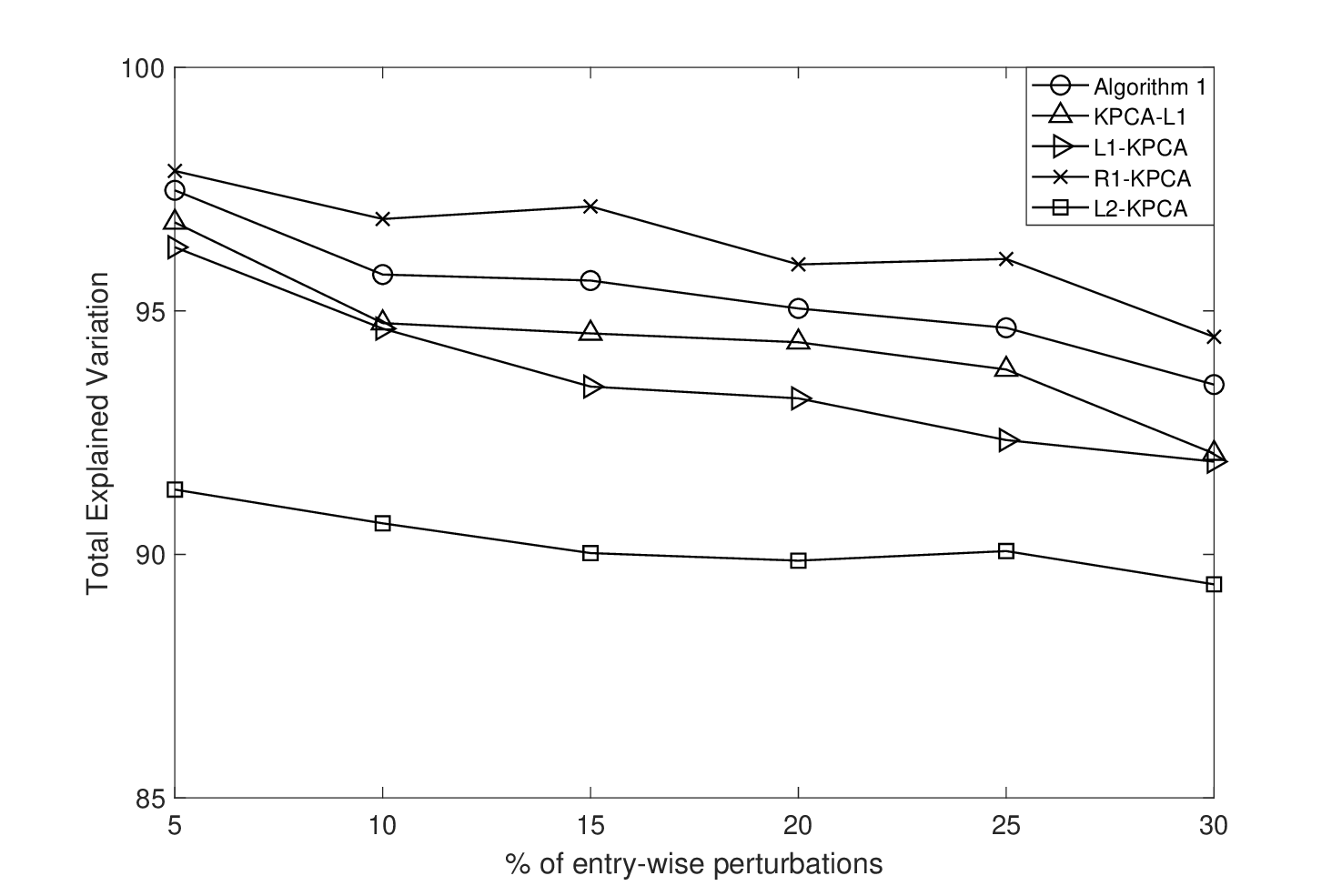}
\caption{Robust Extraction of PCs (Linear Kernel)}
\label{fig:robustness_linear}
\end{figure}

\begin{figure*}[ht]
\centering
\includegraphics[trim={3.2cm 0 3cm 0},clip,scale=0.55]{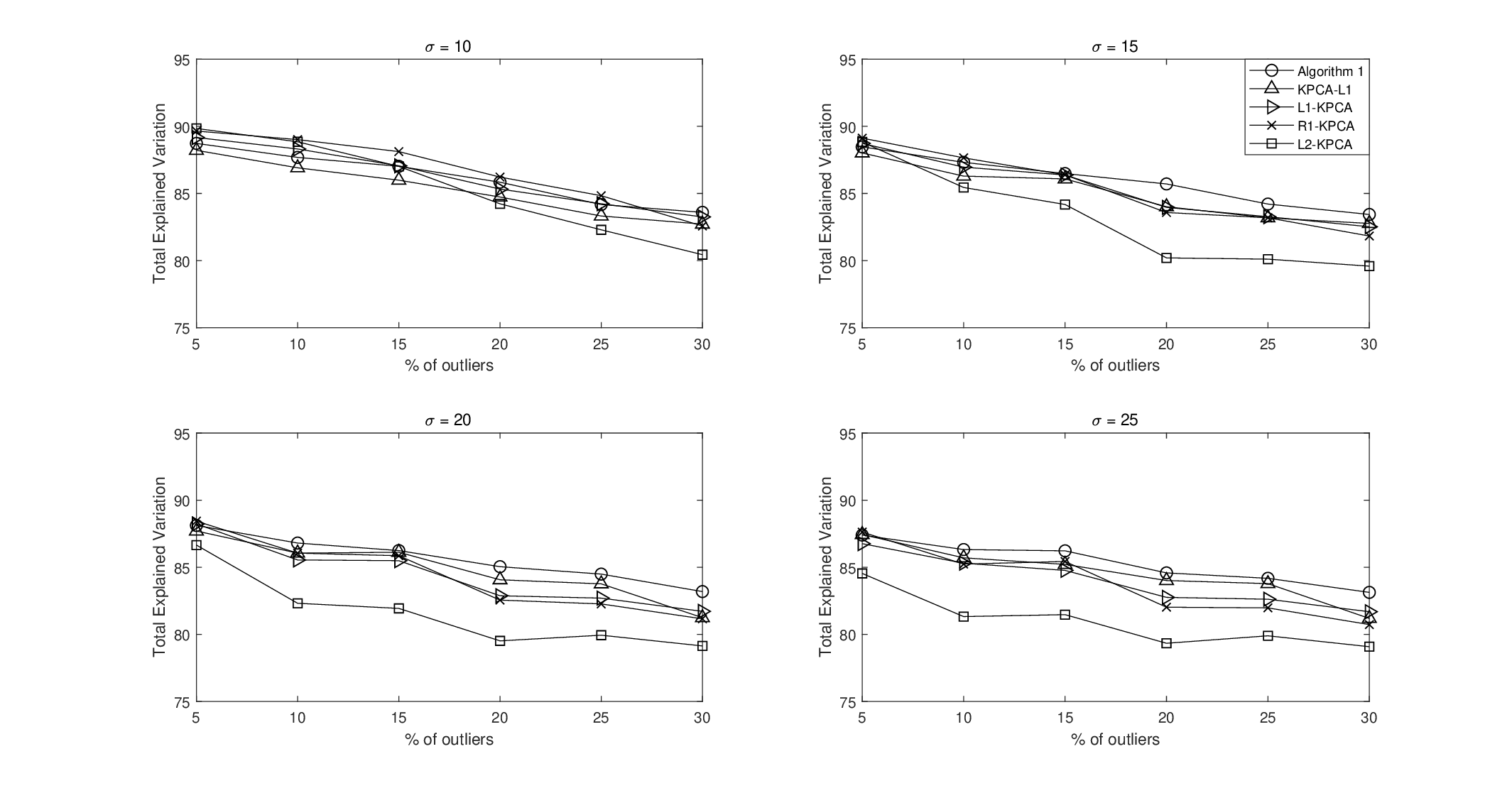}
\caption{Robust Extraction of PCs (Gaussian Kernel with $\sigma$ ranging from $10$ to $25$)}
\label{fig:robustness}
\end{figure*}

In the case of the linear kernel, R1-KPCA achieves the best performance for all values of $r$ followed by the $L_1$-norm based kernel PCA algorithms and L2-KPCA. 
While the loading vectors from L2-KPCA explain about $90\%$ of the variation, those from R1-KPCA, Algorithm \ref{alg:L1KernelPCA}, KPCA-L1, and L1-KPCA explain around $96\%$,$95\%$,$94\%$, and $93\%$ of the variation, respectively.
This demonstrates the robustness of the $R_1$-norm and $L_1$-norm based kernel PCA algorithms with respect to the presence of entry-wise perturbations.
Among the three $L_1$-norm based kernel PCA algorithms, Algorithm \ref{alg:L1KernelPCA} consistently outperforms KPCA-L1 and L1-KPCA by $1\%$ and $2\%$, respectively.
As the percentage of corrupted observations ($r\%$) increases, the total explained variation tends to decrease for all of them but the gaps between them remain the same.
 
When the Gaussian kernel is used, the results are slightly different depending on the value of $r$ and $\sigma$. 
If $r$ and $\sigma$ are small, the effects of entry-wise perturbations are relatively small so that all the algorithms give pretty similar results. 
However, if $r$ or $\sigma$ is large, the effects of entry-wise perturbations are pronounced in the kernel matrix, and therefore, the results are different depending on the robustness of the algorithms.
As shown in Figure \ref{fig:robustness}, the three $L_1$-norm kernel PCA algorithms and R1-KPCA outperform L2-KPCA as in the case of the linear kernel. 
However, while R1-KPCA achieves the best performance for the linear kernel, the $L_1$-norm based kernel PCA algorithms work better than R1-KPCA when the Gaussian kernel is used.
Especially, Algorithm \ref{alg:L1KernelPCA} outperforms all the other algorithms if $r$ exceeds $20$. The superior performance of Algorithm \ref{alg:L1KernelPCA} ranges from $1\%$ to $5\%$ in these cases.

\subsection{Outlier Detection}
\label{subsec:outlier_detection}
$L_2$-norm PCA has been shown to be effective for anomaly detection \cite{shyu2003novel}. The idea is to extract loading vectors using datasets consisting of only normal samples and use these loading vectors to develop a detection model. Specifically, a boundary of normal samples is constructed from the loading vectors and the boundary is used to discriminate normal and abnormal samples.

We extend this principle to outlier detection, i.e. its unsupervised counterpart. In the outlier detection setting, sample labels are not given when the model is built. Therefore, it is not possible to build a detection model solely based on normal samples. Given this context, we run robust kernel PCA algorithms on the entire dataset (with outliers) and use the resulting loading vectors to characterize a boundary of normal samples. Since these loading vectors are less influenced by outliers as illustrated in Section \ref{subsec:robust_extraction}, we expect that they would better construct a normal boundary. We compare the performance of Algorithm \ref{alg:L1KernelPCA} based models to that of KPCA-L1, L1-KPCA, R1-KPCA, and L2-KPCA based models as well as two other popular outlier detection models \cite{breunig2000lof} \cite{liu2008isolation}.

\subsubsection{Toy Examples}
We first illustrate the advantage of using robust kernel PCA for outlier detection using the following two-dimensional toy examples.
\begin{figure}[ht]
\centering
\includegraphics[trim={2cm 0 1cm 0},clip,scale=0.45]{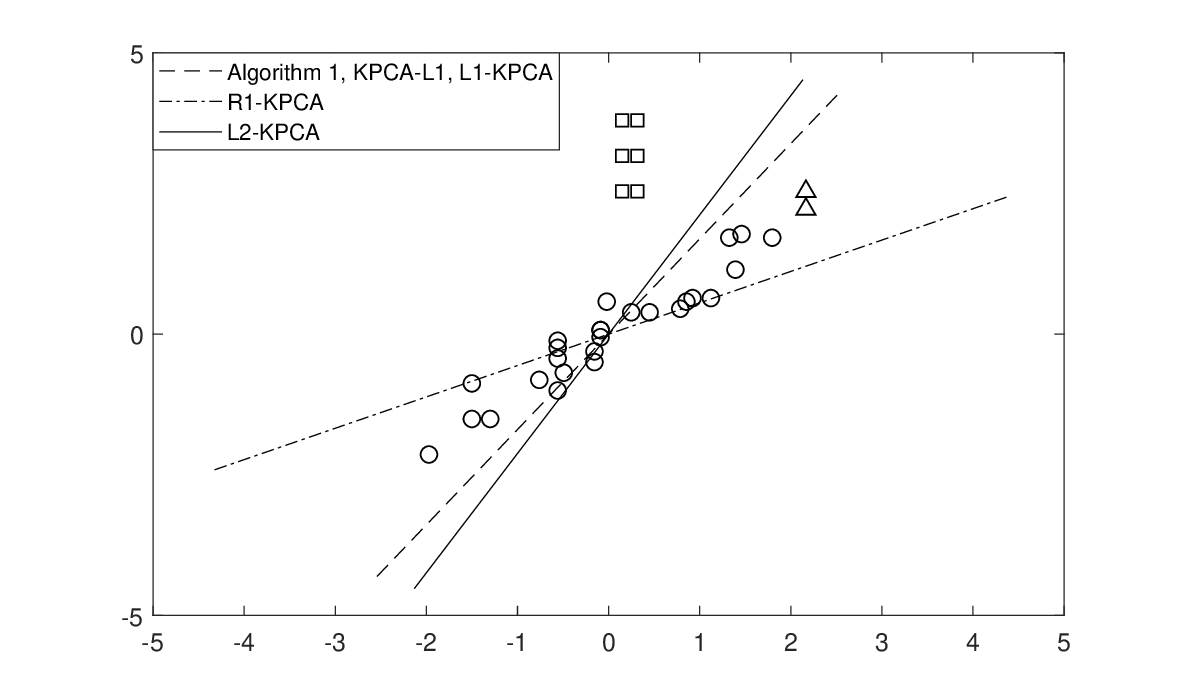}
\caption{The first toy example - original space}
\label{fig:toy_linear}
\end{figure}

Figure \ref{fig:toy_linear} displays the distribution of normal samples and outliers.
As the normal samples follow a linear pattern, we run the kernel PCA algorithms with the linear kernel and represent their first loading vectors in Figure \ref{fig:toy_linear}.
In the figure, the first loading vectors of the three $L_1$-norm based kernel PCA algorithms are represented using a single dashed line since they yield the same first loading vector in this example.
In addition to the normal samples forming a linear pattern, there are some outliers scattered exhibiting two different patterns; the two triangle points are outliers due to their scale and the six square points are outliers since they do not follow the linear pattern.
If the first loading vector exactly matches the linear pattern, outliers can be easily detected in the principal space; the triangle points can be detected due to large first principal components and the square points can be detected from large second principal components. 
However, due to the presence of outliers, it is impossible that the first loading vector exactly matches the linear pattern.
Given this context, we use robust kernel PCA algorithms to obtain the first loading vector with lower deviation from the linear pattern.

\begin{figure*}[h]
\centering
\includegraphics[trim={4cm 0 3cm 0},clip,scale=0.57]{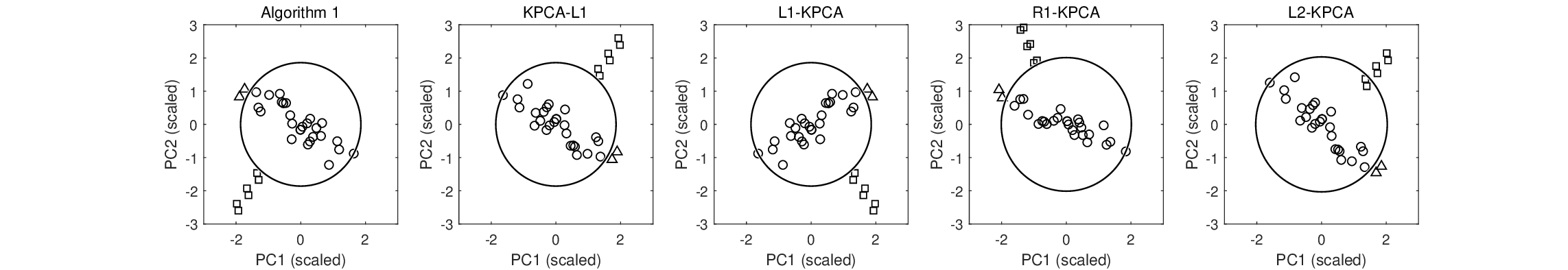}
\caption{The first toy example - principal space}
\label{fig:toy_PCA}
\end{figure*}

Figure \ref{fig:toy_PCA} displays the PCA results of the five kernel PCA algorithms.
In the figure, the x-axis and the y-axis represents the first and the second principal component, respectively.
As shown in the figure, the triangle outliers can be easily separated by the first principal component for any kernel PCA algorithm.
However, while the square outliers can be discriminated by the second principal component of the $L_1$-norm based kernel PCA algorithms and R1-KPCA, there exists some overlap between the normal samples and the square outliers in the range of the second principal component of L2-KPCA.
As seen in the figure, two outliers appear closer to the origin than some normal samples making the circular boundary of the normal samples include them.
On the other hand, all the normal samples are clearly separated from the outliers in the principal space of the $L_1$-norm based kernel PCA algorithms and R1-KPCA, demonstrating the advantage of using robust kernel PCA in outlier detection. This result is consistent with the findings in Figure \ref{fig:robustness_linear}.

\begin{figure}[ht]
\centering
\includegraphics[trim={1cm 0 1cm 0},clip,scale=0.55]{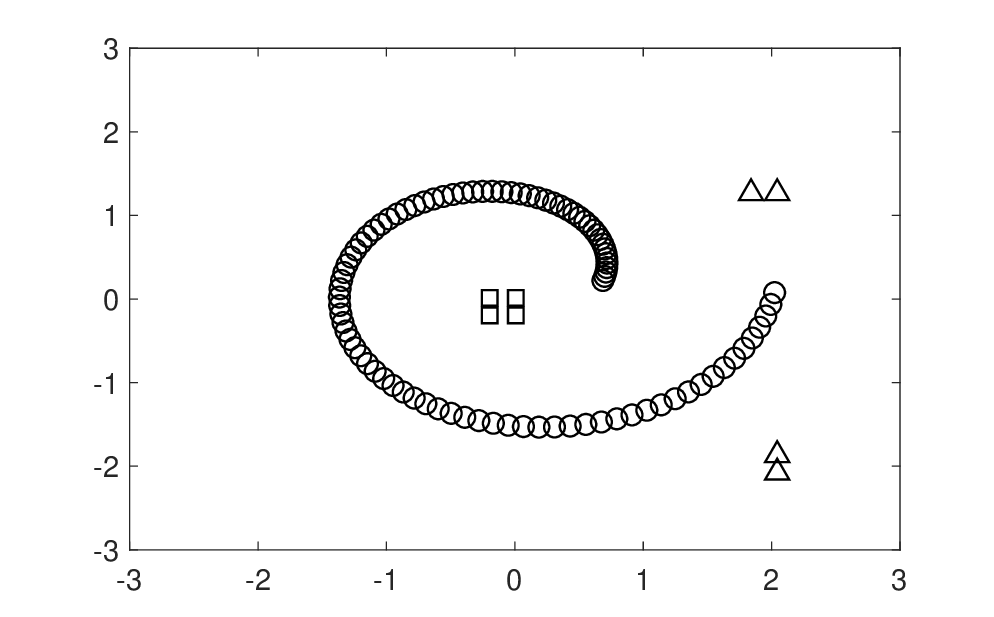}
\caption{The second toy example - original space}
\label{fig:toy_gaussian}
\end{figure}

In order to see if the same result holds for the Gaussian kernel, we consider another example. As shown in Figure \ref{fig:toy_gaussian}, the second example has a spiral pattern consisting of normal samples as well as two types of outliers.
As in the previous example, it has both trivial outliers (the triangle points) and more challenging outliers (the square points). 
In order to obtain nonlinear principal components, we run the five kernel PCA algorithms with the Gaussian kernel.
As Figure \ref{fig:toy_gaussian_PCA} displays, only Algorithm \ref{alg:L1KernelPCA} succeeds to exclude the square outliers from the boundary while the other kernel PCA algorithms include them within the boundary.
This superior performance of Algorithm \ref{alg:L1KernelPCA} with the Gaussian kernel is consistent with the results in Section \ref{subsec:robust_extraction} and attests the effectiveness of using it for outlier detection, especially with the Gaussian kernel.

\begin{figure*}[h]
\centering
\includegraphics[trim={4cm 0 3cm 0},clip,scale=0.55]{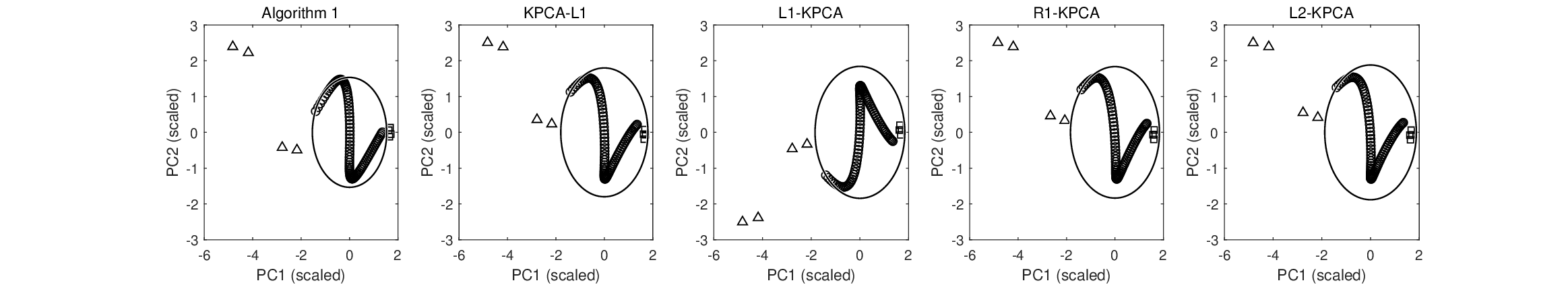}
\caption{The second toy example - principal space}
\label{fig:toy_gaussian_PCA}
\end{figure*}

\subsubsection{Real-world Datasets}

For outlier detection, we use datasets from the UCI Machine Learning Repository \cite{Lichman:2013} and the ODDS Library \cite{Rayana:2016}, see Table \ref{Outlier Detection - Dataset}.

\begin{table}[h]
\caption{Real-world Datasets for Outlier Detection}
\label{Outlier Detection - Dataset}
\vspace{-0.25cm}
\begin{center}
\begin{small}
\begin{tabular}{|l|r|r|r|}
\hline
Data set & \# samples & \# features & \# outliers \\
\hline
WBC    &	378	&	30	&	21 (7.6\%) \\
Ionosphere  & 351       & 33           & 126 (36\%) \\
BreastW	&	683	&	9	&	239	(35\%)	\\
Cardio  & 1831      & 21           & 176 (9.6\%) \\
Musk    & 3062 & 166 &	97 (3.2\%)\\
Mnist    & 	7603       &	100	& 700 (9.2\%) \\
\hline
\end{tabular}
\end{small}
\end{center}
\vskip -0.1in
\end{table}

In this experiment, we use a similar detection rule as the one in \cite{shyu2003novel} where it is applied for anomaly detection. Let $Y \in \mathbb{R}^{n \times p}$ denote $p$ principal components and $m_j$ and $\lambda_j$ be the mean and variance of the $j^{th}$ principal component, respectively. To detect outliers, we consider the following detection model, which classify the $i^{th}$ sample as an outlier if
\begin{align}
\sum\limits_{\{j:\lambda_j \geq \alpha\}} \frac{(Y_{ij}-m_j)^2}{\lambda_j} > c. \label{eq:detection_rule}
\end{align}

The metric appearing on the left-hand side of \eqref{eq:detection_rule} represents the squared Euclidean distance to the origin in the standardized principal space consisting of principal components whose variance is greater than or equal to $\alpha$. Therefore, our model can be understood as drawing a circular boundary (as illustrated in Figures \ref{fig:toy_PCA} and \ref{fig:toy_gaussian_PCA}) on this reduced standardized principal space. 
Since sample labels are unknown at the stage of building a model in the outlier detection setting, it is unclear how to choose an appropriate $c$.
So, we compute precision and recall with varying $c$ and evaluate the performance of each model using AUC under the precision-recall curve. We compare AUC of the Algorithm \ref{alg:L1KernelPCA} based models to that of the KPCA-L1, L1-KPCA, R1-KPCA, and L2-KPCA based models as well as that of the two popular outlier detection models, Local Outlier Factor (LOF) \cite{breunig2000lof} and Isolation Forest (iForest) \cite{liu2008isolation}.

Since principal components having small sample variance provide minor information, we only consider principal components whose sample variance is greater than or equal to some threshold value $\alpha$. We set $\alpha$ be to the largest $\bar{\alpha}$ such that 
\begin{center}
$ 0.8 \times \sum\limits_{j=1}^{d} \lambda_j \leq \sum\limits_{\{j:\lambda_j \geq \bar{\alpha} \}} \lambda_j$
\end{center}
holds where $d$ is the number of features. For the choice of the kernel function, we consider both the linear kernel and the Gaussian kernel with the width parameter $\sigma$ of the Gaussian kernel to be equal to $d$. On the other hand, we set the number of nearest neighbors to 10 in LOF, and the number of trees, the size of subsample, and the number of rounds to 100, 256, and 10, respectively in iForest since these parameter values are commonly used.

\begin{table*}[h]
\caption{AUC of the Outlier Detection Models}
\label{Outlier Detection - AUC}
\vspace{-0.25cm}
\begin{center}
\begin{threeparttable}
\begin{small}
\begin{tabular}{|c|c|c|c|c|c|c|c|c|c|c|c|c|}
\hline
\multirow{3}{*}{Datasets} & \multicolumn{12}{c|}{AUC}                                                                                                                                                                                                                                                                                                                                                                                                                                                                                                   \\ \cline{2-13} 
                          & \multicolumn{5}{c|}{Linear}                                                                                                                                                                                                         & \multicolumn{5}{c|}{Gaussian}                                                                                                                                                                                                       & \multirow{2}{*}{LOF} & \multirow{2}{*}{iForest} \\ \cline{2-11}
                          & Algo 1          & \begin{tabular}[c]{@{}c@{}}KPCA\\ -L1\end{tabular} & \begin{tabular}[c]{@{}c@{}}L1-\\ KPCA\end{tabular} & \begin{tabular}[c]{@{}c@{}}R1-\\ KPCA\end{tabular} & \begin{tabular}[c]{@{}c@{}}L2-\\ KPCA\end{tabular} & Algo 1          & \begin{tabular}[c]{@{}c@{}}KPCA\\ -L1\end{tabular} & \begin{tabular}[c]{@{}c@{}}L1-\\ KPCA\end{tabular} & \begin{tabular}[c]{@{}c@{}}R1-\\ KPCA\end{tabular} & \begin{tabular}[c]{@{}c@{}}L2-\\ KPCA\end{tabular} &                      &                          \\ \hline
WBC                       & 0.5208          & 0.5288                                             & 0.5320                                    & 0.4658                                             & 0.4798                                             & 0.5292          & \textbf{0.5340}                                    & \textbf{0.5337}                                    & 0.5072                                             & 0.5224                                             & 0.3451               & \textbf{0.5525}          \\ \hline
Ionosphere                & 0.6625          & \textbf{0.7319}                                    & 0.6834                                             & \textbf{0.7642}                                    & 0.7057                                             & \textbf{0.7238} & 0.6806                                             & 0.6887                                             & 0.7041                                             & 0.6992                                             & 0.7032               & 0.7067                   \\ \hline
Breastw                   & 0.9250          & 0.9125                                             & 0.9218                                             & 0.9269                                             & 0.9152                                             & \textbf{0.9428} & 0.9287                                             & {0.9354}                                             & \textbf{0.9521}                                    & {0.9309}                                             & 0.3750               & \textbf{0.9513}          \\ \hline
Cardio                    & {0.5790} & 0.5551                                             & \textbf{0.5799}                                    & 0.4265                                             & 0.5066                                             & \textbf{0.6096} & 0.5752                                    & \textbf{0.5963}                                    & 0.5116                                             & 0.4664                                             & 0.1921               & 0.5114                   \\ \hline
Musk                      & \textbf{0.9947}          & \textbf{0.9947}                                              & \textbf{0.9947}                                    & 0.8055                                             & 0.9358                                             & \textbf{0.9947}      & \textbf{0.9947}                                                     & \textbf{0.9947}                                    & 0.9916                                                   & \textbf{0.9947}                                    & 0.0925               & 0.7596                   \\ \hline
MNIST                     & \textbf{0.3985}                   &  \textbf{0.4002}                                                  &  N/A                                                  & N/A                                                &   {0.3914}                                                 & \textbf{0.3966}                 &  {0.3913}                                                  &     N/A                                               & N/A                                                   &    0.3639                                                & 0.1924               & 0.3380                   \\ \hline
\end{tabular}
    \begin{tablenotes}
      \small
      \item[*] N/A: The experiments can not be completed within the period of 24 hours.
    \end{tablenotes}
\end{small}
\end{threeparttable}
\end{center}
\vskip -0.1in
\end{table*}

Table \ref{Outlier Detection - AUC} displays the AUCs of the 12 different detection models. 
The numbers in bold present the highest AUC cases (there can be several similar top performances).
If outliers are obvious, any kernel PCA based model works well as seen in the case of Breastw and Musk. 
However, if outliers are unclear, the Algorithm \ref{alg:L1KernelPCA} based detection models tend to outperform the other detection models.
Especially, the Algorithm \ref{alg:L1KernelPCA} based model with the Gaussian kernel consistently achieves top AUC values.
Compared to the kernel PCA based models, LOF and iForest do not work well.
LOF never achieves the top performance and iForest is not competitive for high-dimensional datasets such Must and MNIST although it yields the top AUC values for WBC and Breastw.
As opposed to them, the Algorithm \ref{alg:L1KernelPCA} based model with the Gaussian kernel consistently works well regardless of the size of the problem, demonstrating its effectiveness in outlier detection.

\subsection{Runtime Comparison}
\label{subsub:runtime_comparison}
Lastly, we compare the runtime of Algorithm \ref{alg:L1KernelPCA} to that of KPCA-L1, L1-KPCA, R1-KPCA, and L2-KPCA.
In order to obtain a runtime comparison, we run them on the six real-world datasets presented in Table \ref{Outlier Detection - Dataset} and measure the time taken to get all the principal components.

\begin{table*}[h]
\caption{Runtime Comparison}
\label{Outlier Detection - Runtime}
\vspace{-0.25cm}
\begin{center}
\begin{small}
\begin{tabular}{|c|r|r|r|r|r|r|r|r|r|r|}
\hline
\multirow{3}{*}{Datasets} & \multicolumn{10}{c|}{Runtime (minutes)}                                                                                                                                                                                                                                                                                                                                                                                                                                                                                                                                                                                                                                   \\ \cline{2-11} 
                          & \multicolumn{5}{c|}{Linear}                                                                                                                                                                                                                                                                                                         & \multicolumn{5}{c|}{Gaussian}                                                                                                                                                                                                                                                                                                       \\ \cline{2-11} 
                          & \multicolumn{1}{c|}{Algo 1} & \multicolumn{1}{c|}{\begin{tabular}[c]{@{}c@{}}KPCA\\ -L1\end{tabular}} & \multicolumn{1}{c|}{\begin{tabular}[c]{@{}c@{}}L1-\\ KPCA\end{tabular}} & \multicolumn{1}{c|}{\begin{tabular}[c]{@{}c@{}}R1-\\ KPCA\end{tabular}} & \multicolumn{1}{c|}{\begin{tabular}[c]{@{}c@{}}L2-\\ KPCA\end{tabular}} & \multicolumn{1}{c|}{Algo 1} & \multicolumn{1}{c|}{\begin{tabular}[c]{@{}c@{}}KPCA\\ -L1\end{tabular}} & \multicolumn{1}{c|}{\begin{tabular}[c]{@{}c@{}}L1-\\ KPCA\end{tabular}} & \multicolumn{1}{c|}{\begin{tabular}[c]{@{}c@{}}R1-\\ KPCA\end{tabular}} & \multicolumn{1}{c|}{\begin{tabular}[c]{@{}c@{}}L2-\\ KPCA\end{tabular}} \\ \hline
WBC                       & 0.0                        & 0.0                                                                    & 0.1                                                                    & 0.2                                                                   & 0.0                                                                    & 0.0                        & 0.0                                                                    & 0.0                                                                   & 0.2                                                                    & 0.0                                                                   \\ \hline
Ionosphere                & 0.0                        & 0.0                                                                    & 0.1                                                                & 0.2                                                                   & 0.0                                                                    & 0.0                        & 0.0                                                                    & 0.1                                                                    & 0.2                                                                    & 0.0                                                                    \\ \hline
Breastw                   & 0.0                        & 0.0                                                                    & 0.1                                                                    & 0.2                                                                   & 0.0                                                                    & 0.0                        & 0.0                                                                    & 0.1                                                                   & 0.1                                                                    & 0.0                                                                    \\ \hline
Cardio                    & 0.0                        & 1.9                                                                  & 12.3                                                                 & 7.6                                                                  & 0.1                                                                   & 0.0                        & 1.8                                                                   & 15.3                                                                 & 6.2                                                                  & 0.1                                                                    \\ \hline
Musk                      & 0.5                       & 69.6                                                                 & 999.1                                                              & 973.5                                                               & 0.3                                                                    & 0.5                       & 12.9                                                                 & 1018.0                                                              & 968.3                                                               & 0.1                                                                   \\ \hline
MNIST                     & 0.9				        & 263.3                                             & $>1440$
& $>1440$
& 1.8                                                  & 1.0				        & 263.2                                              & $>1440$                                                     & $>1440$                                                  & 1.8                                                 
\\ \hline
\end{tabular}
\end{small}
\end{center}
\vskip -0.1in
\end{table*}

As shown in Table \ref{Outlier Detection - Runtime}, the runtime largely varies across the algorithms. 
Among the $L_1$-norm based kernel PCA algorithms, Algorithm \ref{alg:L1KernelPCA} has the smallest runtime for all datasets.
Actually, it is much faster than the other two algorithms since it requires only one matrix-vector multiplication while the other algorithms entail either eigen-decomposition or solving a system of equations.
R1-KPCA is also not as fast as Algorithm \ref{alg:L1KernelPCA} since it involves QR-decomposition in each iteration to make loading vectors orthogonal.
Among the robust kernel PCA algorithms, only Algorithm \ref{alg:L1KernelPCA} is computationally comparable to L2-KPCA, making it the best choice for robust kernel PCA in a large-scale setting.

\section{Conclusion}
In this work, we present a simple algorithm for $L_1$-norm kernel PCA and provide its convergence analysis. In order to develop it, we first reformulate $L_1$-norm kernel PCA into a geometrically interpretable problem and derive a geometric interpretation behind it. Based on the geometric interpretation, we develop an algorithm to which the kernel trick is applicable. In the convergence analysis, we prove that the algorithm converges to a local optimal solution in a finite number of steps and the sequence of objective values converges at a linear rate. 

The computational experiments demonstrate the robustness of the proposed algorithm in the presence of entry-wise perturbations and the runtime comparison shows that it takes much less time than the other robust kernel PCA algorithms. Also, its application to outlier detection outperforms all of the other benchmark algorithms. The model based on the proposed algorithm is not only better than that of the other kernel PCA based models but also outperforms LOF and iForest, especially when high-dimensional datasets are considered.


%



%

\bibliographystyle{IEEEtran}
\bibliography{main}

\begin{thebibliography}{10}
\providecommand{\url}[1]{#1}
\csname url@samestyle\endcsname
\providecommand{\newblock}{\relax}
\providecommand{\bibinfo}[2]{#2}
\providecommand{\BIBentrySTDinterwordspacing}{\spaceskip=0pt\relax}
\providecommand{\BIBentryALTinterwordstretchfactor}{4}
\providecommand{\BIBentryALTinterwordspacing}{\spaceskip=\fontdimen2\font plus
\BIBentryALTinterwordstretchfactor\fontdimen3\font minus
  \fontdimen4\font\relax}
\providecommand{\BIBforeignlanguage}[2]{{%
\expandafter\ifx\csname l@#1\endcsname\relax
\typeout{** WARNING: IEEEtran.bst: No hyphenation pattern has been}%
\typeout{** loaded for the language `#1'. Using the pattern for}%
\typeout{** the default language instead.}%
\else
\language=\csname l@#1\endcsname
\fi
#2}}
\providecommand{\BIBdecl}{\relax}
\BIBdecl

\bibitem{jolliffe2002principal}
I.~Jolliffe, \emph{Principal \uppercase{C}omponent \uppercase{A}nalysis}.\hskip
  1em plus 0.5em minus 0.4em\relax Wiley Online Library, 2002.

\bibitem{scholkopf1997kernel}
B.~Sch{\"o}lkopf, A.~Smola, and K.-R. M{\"u}ller, ``Kernel
  \uppercase{P}rincipal \uppercase{C}omponent \uppercase{A}nalysis,'' in
  \emph{International Conference on Artificial Neural Networks}, 1997, pp.
  583--588.

\bibitem{golub2012matrix}
G.~H. Golub and C.~F. Van~Loan, \emph{Matrix Computations}.\hskip 1em plus
  0.5em minus 0.4em\relax JHU Press, 2012, vol.~3.

\bibitem{candes2009exact}
E.~J. Cand{\`e}s and B.~Recht, ``{Exact Matrix Completion via Convex
  Optimization},'' \emph{Foundations of Computational Mathematics}, vol.~9,
  no.~6, p. 717, 2009.

\bibitem{candes2011robust}
E.~J. Cand{\`e}s, X.~Li, Y.~Ma, and J.~Wright, ``{Robust Principal Component
  Analysis?}'' \emph{Journal of the ACM}, vol.~58, no.~3, p.~11, 2011.

\bibitem{xu2010robust}
H.~Xu, C.~Caramanis, and S.~Sanghavi, ``{Robust PCA via outlier pursuit},'' in
  \emph{Advances in Neural Information Processing Systems}, 2010, pp.
  2496--2504.

\bibitem{liu2013robust}
G.~Liu, Z.~Lin, S.~Yan, J.~Sun, Y.~Yu, and Y.~Ma, ``{Robust Recovery of
  Subspace Structures by Low-Rank Representation},'' \emph{IEEE Transactions on
  Pattern Analysis and Machine Intelligence}, vol.~35, no.~1, pp. 171--184,
  2013.

\bibitem{liu2014recovery}
G.~Liu and P.~Li, ``{Recovery of Coherent Data via Low-Rank Dictionary
  Pursuit},'' in \emph{Advances in Neural Information Processing Systems},
  2014, pp. 1206--1214.

\bibitem{liu2016deterministic}
G.~Liu, H.~Xu, J.~Tang, Q.~Liu, and S.~Yan, ``{A Deterministic Analysis for
  LRR},'' \emph{IEEE Transactions on Pattern Analysis and Machine
  Intelligence}, vol.~38, no.~3, pp. 417--430, 2016.

\bibitem{liu2017blessing}
G.~Liu, Q.~Liu, and P.~Li, ``{Blessing of Dimensionality: Recovering Mixture
  Data via Dictionary Pursuit},'' \emph{IEEE Transactions on Pattern Analysis
  and Machine Intelligence}, vol.~39, no.~1, pp. 47--60, 2017.

\bibitem{brooks2013pure}
J.~P. Brooks, J.~H. Dul{\'a}, and E.~L. Boone, ``{A Pure L1-norm Principal
  Component Analysis},'' \emph{Computational Statistics \& Data Analysis},
  vol.~61, pp. 83--98, 2013.

\bibitem{Park2015}
Y.~W. Park, ``Optimization for \uppercase{R}egression, \uppercase{PCA}, and
  \uppercase{SVM}: Optimality and \uppercase{S}calability,'' Ph.D.
  dissertation, Northwestern University, 2015.

\bibitem{park2016iteratively}
Y.~W. Park and D.~Klabjan, ``Iteratively \uppercase{R}eweighted
  \uppercase{L}east \uppercase{S}quares \uppercase{A}lgorithms for
  \uppercase{L}1-\uppercase{N}orm \uppercase{P}rincipal \uppercase{C}omponent
  \uppercase{A}nalysis,'' in \emph{IEEE International Conference on Data
  Mining}, 2016, pp. 430--438.

\bibitem{park2018three}
------, ``{Three iteratively reweighted least squares algorithms for $L_1$-norm
  principal component analysis},'' \emph{Knowledge and Information Systems},
  vol.~54, no.~3, pp. 541--565, 2018.

\bibitem{nie2011robust}
F.~Nie, H.~Huang, C.~Ding, D.~Luo, and H.~Wang, ``Robust \uppercase{p}rincipal
  \uppercase{c}omponent \uppercase{a}nalysis with
  \uppercase{n}on-\uppercase{g}reedy ${L_1}$-\uppercase{n}orm
  \uppercase{m}aximization,'' in \emph{Proceedings of the International Joint
  Conference on Artificial Intelligence}, vol.~22, no.~1, 2011, pp. 1433--1438.

\bibitem{mccoy2011two}
M.~McCoy and J.~A. Tropp, ``Two \uppercase{P}roposals for \uppercase{R}obust
  \uppercase{PCA} using \uppercase{S}emidefinite \uppercase{P}rogramming,''
  \emph{Electronic Journal of Statistics}, vol.~5, pp. 1123--1160, 2011.

\bibitem{markopoulos2014optimal}
P.~P. Markopoulos, G.~N. Karystinos, and D.~A. Pados, ``Optimal
  \uppercase{A}lgorithms for ${L_1}$-subspace \uppercase{S}ignal
  \uppercase{P}rocessing,'' \emph{IEEE Transactions on Signal Processing},
  vol.~62, no.~19, pp. 5046--5058, 2014.

\bibitem{ke2005}
Q.~Ke and T.~Kanade, ``{Robust $L_1$ Norm Factorization in the Presence of
  Outliers and Missing Data by Alternative Convex Programming},'' in
  \emph{Proceedings of the 2005 IEEE Computer Society Conference on Computer
  Vision and Pattern Recognition}, 2005, pp. 739--746.

\bibitem{ding2006r}
C.~Ding, D.~Zhou, X.~He, and H.~Zha, ``{$R_1$-PCA: Rotational Invariant
  $L_1$-norm Principal Component Analysis for Robust Subspace Factorization},''
  in \emph{Proceedings of the 23rd International Conference on Machine
  Learning}.\hskip 1em plus 0.5em minus 0.4em\relax ACM, 2006, pp. 281--288.

\bibitem{karystinos2010efficient}
G.~N. Karystinos and A.~P. Liavas, ``Efficient \uppercase{C}omputation of the
  \uppercase{B}inary \uppercase{V}ector \uppercase{T}hat \uppercase{M}aximizes
  a \uppercase{R}ank-\uppercase{D}eficient \uppercase{Q}uadratic
  \uppercase{F}orm,'' \emph{IEEE Transactions on Information Theory}, vol.~56,
  no.~7, pp. 3581--3593, 2010.

\bibitem{nesterov1998semidefinite}
Y.~Nesterov, ``Semidefinite \uppercase{R}elaxation and \uppercase{N}onconvex
  \uppercase{Q}uadratic \uppercase{O}ptimization,'' \emph{Optimization
  \uppercase{M}ethods and \uppercase{S}oftware}, vol.~9, no. 1-3, pp. 141--160,
  1998.

\bibitem{kwak2008principal}
N.~Kwak, ``Principal \uppercase{C}omponent \uppercase{A}nalysis based on
  ${L_1}$-norm \uppercase{M}aximization,'' \emph{IEEE Transactions on Pattern
  Analysis and Machine Intelligence}, vol.~30, no.~9, pp. 1672--1680, 2008.

\bibitem{kwak2013nonlinear}
------, ``{Nonlinear projection trick in kernel methods: An alternative to the
  kernel trick},'' \emph{IEEE Transactions on Neural Networks and Learning
  Systems}, vol.~24, no.~12, pp. 2113--2119, 2013.

\bibitem{xiao2013l1}
Y.~Xiao, H.~Wang, W.~Xu, and J.~Zhou, ``{L1 norm based KPCA for novelty
  detection},'' \emph{Pattern Recognition}, vol.~46, no.~1, pp. 389--396, 2013.

\bibitem{boyd2004convex}
S.~Boyd and L.~Vandenberghe, \emph{Convex Optimization}.\hskip 1em plus 0.5em
  minus 0.4em\relax Cambridge University Press, 2004.

\bibitem{goemans1995improved}
M.~X. Goemans and D.~P. Williamson, ``{Improved Approximation Algorithms for
  Maximum Cut and Satisfiability Problems Using Semidefinite Programming},''
  \emph{Journal of the ACM (JACM)}, vol.~42, no.~6, pp. 1115--1145, 1995.

\bibitem{shyu2003novel}
M.-L. Shyu, S.-C. Chen, K.~Sarinnapakorn, and L.~Chang, ``A \uppercase{N}ovel
  \uppercase{A}nomaly \uppercase{D}etection \uppercase{S}cheme
  \uppercase{B}ased on \uppercase{P}rincipal \uppercase{C}omponent
  \uppercase{C}lassifier,'' in \emph{IEEE International Conference on Data
  Mining}, 2003, pp. 172--179.

\bibitem{breunig2000lof}
M.~M. Breunig, H.-P. Kriegel, R.~T. Ng, and J.~Sander, ``{LOF: Identifying
  Density-Based Local Outliers},'' in \emph{ACM SIGMOD Record}, vol.~29, no.~2,
  2000, pp. 93--104.

\bibitem{liu2008isolation}
F.~T. Liu, K.~M. Ting, and Z.-H. Zhou, ``{Isolation Forest},'' in \emph{IEEE
  International Conference on Data Mining}, 2008, pp. 413--422.

\bibitem{Lichman:2013}
\BIBentryALTinterwordspacing
M.~Lichman, ``{UCI Machine Learning Repository},'' 2013. [Online]. Available:
  \url{http://archive.ics.uci.edu/ml}
\BIBentrySTDinterwordspacing

\bibitem{Rayana:2016}
\BIBentryALTinterwordspacing
S.~Rayana, ``{ODDS Library},'' 2016. [Online]. Available:
  \url{http://odds.cs.stonybrook.edu}
\BIBentrySTDinterwordspacing

\end{thebibliography}

%

\begin{IEEEbiography}[{\includegraphics[width=1in,height=1.25in,clip,keepaspectratio]{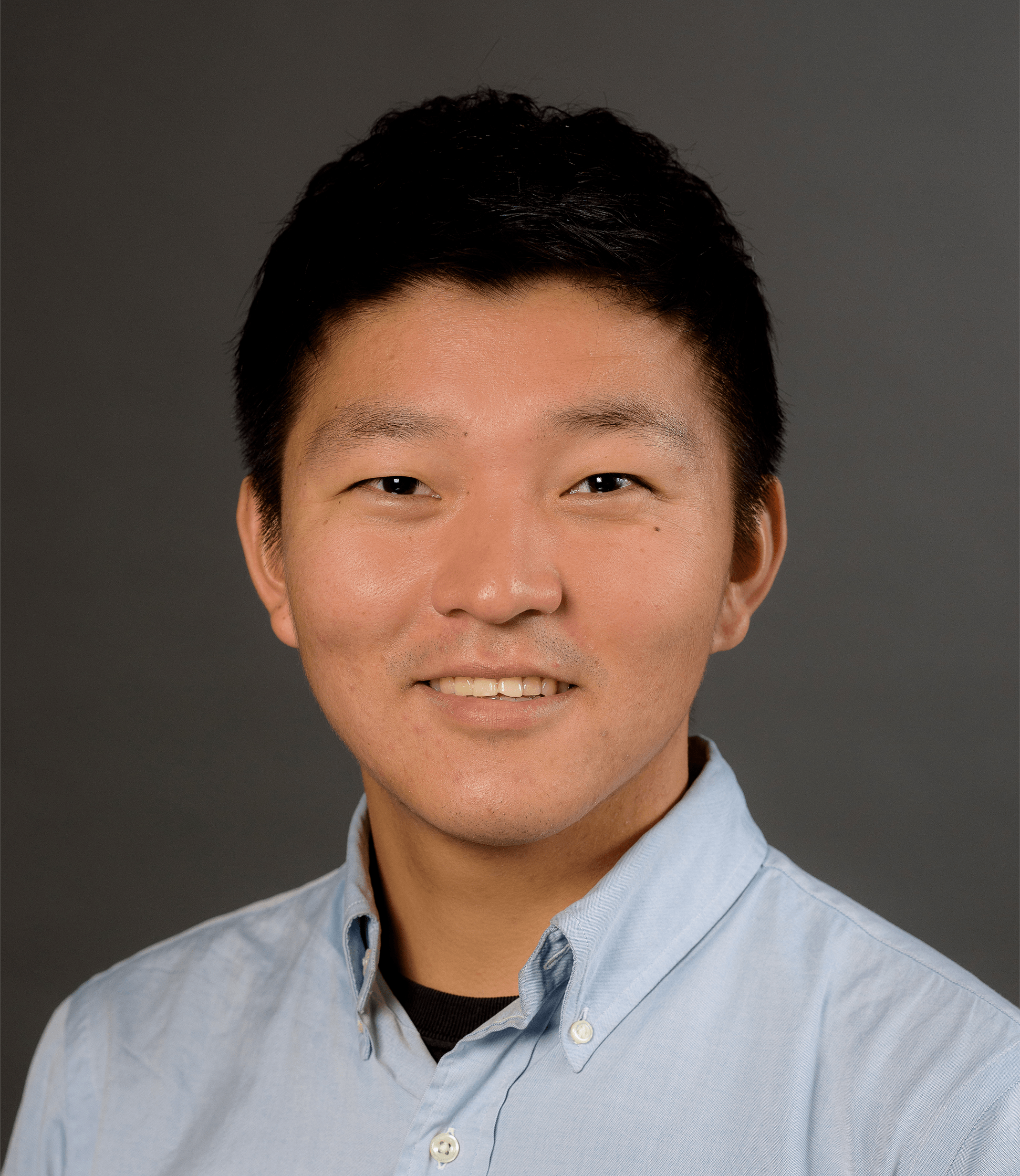}}]{Cheolmin Kim}
is a Ph.D student in Industrial Engineering and Management Science at Northwestern University. He received his B.S in Industrial Engineering and B.A in Economics from Seoul National University. His research interests are at the interaction of optimization and machine learning. He is interested in designing a new machine learning model, developing an training algorithm and analyzing the efficiency of the algorithm from an optimization perspective.
\end{IEEEbiography}

\begin{IEEEbiography}[{\includegraphics[width=1in,height=1.25in,clip,keepaspectratio]{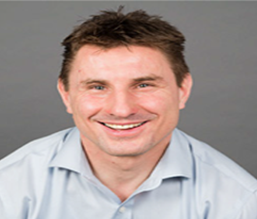}}]
{Diego Klabjan} is a professor at Northwestern University, Department of Industrial Engineering and Management Sciences. He is also Founding Director, Master of Science in Analytics. After obtaining his doctorate from the School of Industrial and Systems Engineering of the Georgia Institute of Technology in 1999 in Algorithms, Combinatorics, and Optimization, in the same year he joined the University of Illinois at Urbana-Champaign. In 2007 he became an associate professor at Northwestern and in 2012 he was promoted to a full professor. His research is focused on machine learning, deep learning and analytics with concentration in finance, transportation, sport, and bioinformatics. Professor Klabjan has led projects with large companies such as Intel, Baxter, Allstate, AbbVie, FedEx Express, General Motors, United Continental, and many others, and he is also assisting numerous start-ups with their analytics needs. He is also a founder of Opex Analytics LLC.
\end{IEEEbiography}





\end{document}